\definecolor{ruby}{rgb}{0.88, 0.07, 0.37}
\definecolor{tealblue}{rgb}{0.21, 0.46, 0.53}
\newtheorem{lemma}{Lemma}
\newcolumntype{Y}{>{\centering\arraybackslash}X}
\ificcvfinal\pagestyle{empty}\fi
\begin{document}
\title{Point-set Distances for Learning Representations of 3D Point Clouds}
\author{
Trung Nguyen$^{1}$ \quad
Quang-Hieu Pham$^{3}$ \quad
Tam Le$^{4}$ \quad 
Tung Pham$^{1}$ \quad
Nhat Ho$^{5}$ \quad 
Binh-Son Hua$^{1,2}$\\
$^{1}$VinAI Research, Vietnam \quad $^{2}$VinUniversity, Vietnam \\$^{3}$Woven Planet North America, Level 5 ~~
$^{4}$RIKEN AIP, Japan ~~ $^{5}$University of Texas, Austin \\
}
\maketitle
\ificcvfinal\thispagestyle{empty}\fi

\begin{figure*}
  \centering
  \newcommand{\sizea}{0.138\linewidth}
  \includegraphics[width=\sizea]{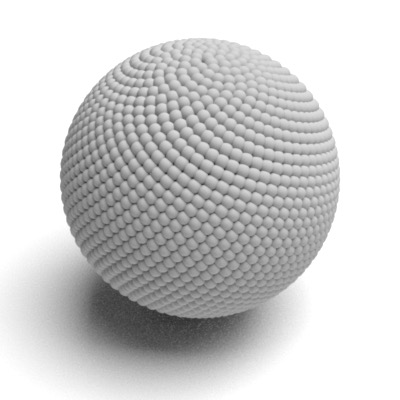}%
  \includegraphics[width=\sizea]{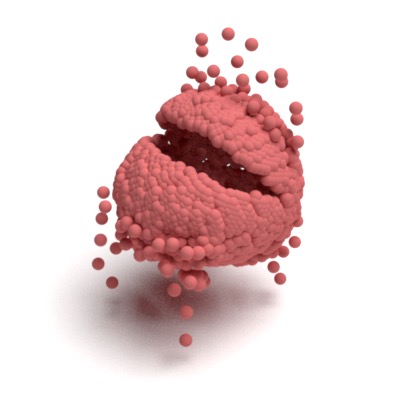}%
  \includegraphics[width=\sizea]{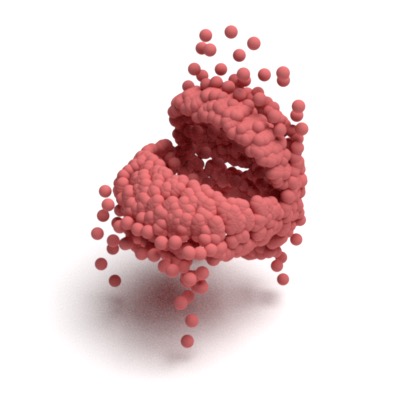}%
  \includegraphics[width=\sizea]{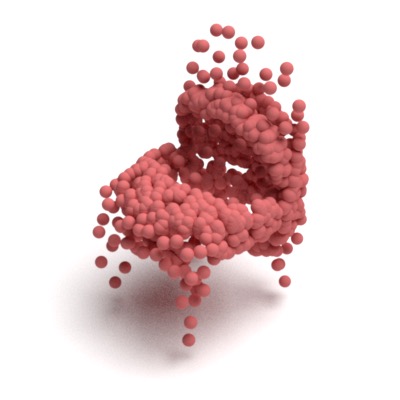}%
  \includegraphics[width=\sizea]{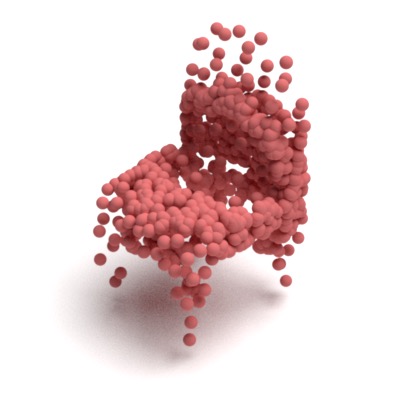}%
  \includegraphics[width=\sizea]{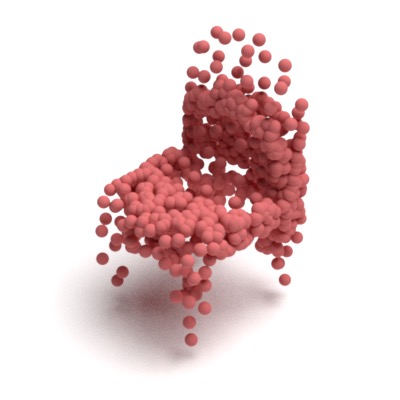}%
  \includegraphics[width=\sizea]{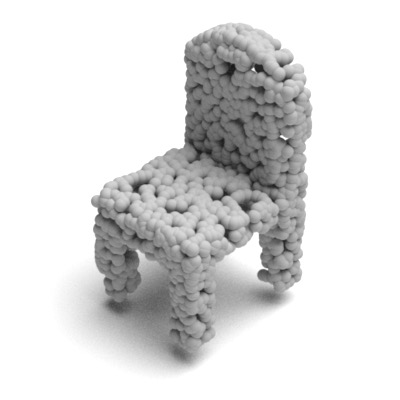}\\
  \includegraphics[width=\sizea]{figures/teaser/sphere.png}%
  \includegraphics[width=\sizea]{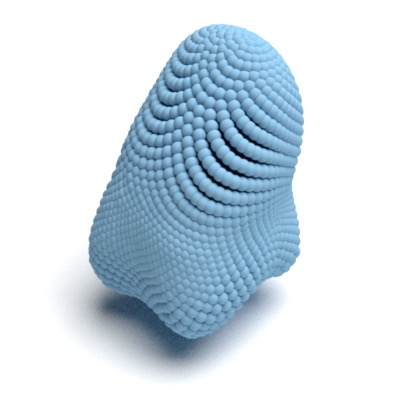}%
  \includegraphics[width=\sizea]{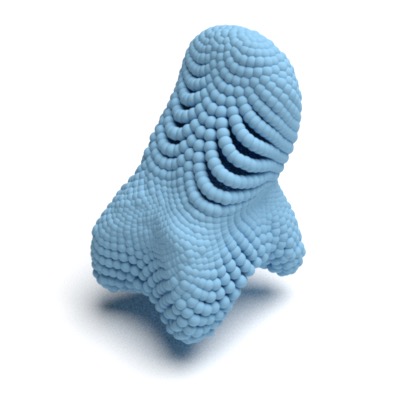}%
  \includegraphics[width=\sizea]{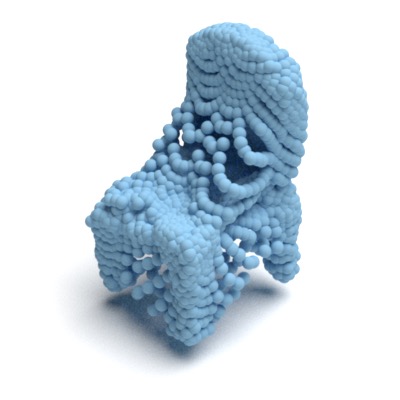}%
  \includegraphics[width=\sizea]{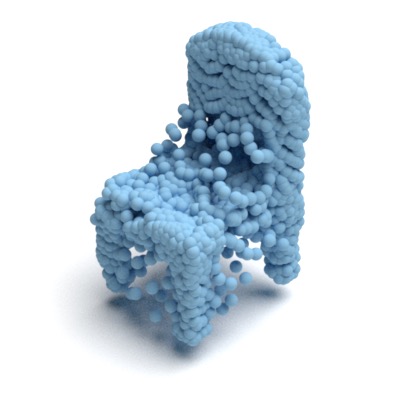}%
  \includegraphics[width=\sizea]{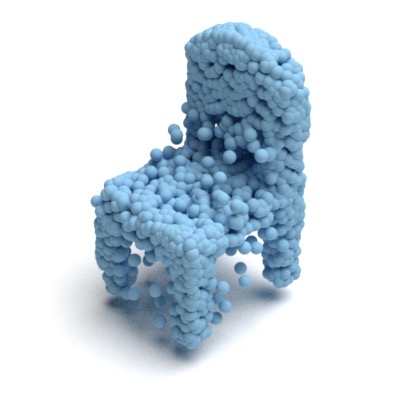}%
  \includegraphics[width=\sizea]{figures/teaser/chair.png}
  \caption{
    We advocate the use of sliced Wasserstein distance for training 3D point cloud autoencoders.
    In this example, we try to morph a sphere into a chair by optimizing two different loss functions:
    Chamfer discrepancy (top, red) and sliced Wasserstein distance (bottom, blue).
    The proposed sliced Wasserstein distance only takes 1000 iterations to converge, while it takes 50000 iterations for Chamfer discrepancy.
  }
  \label{fig:teaser}
\end{figure*}

\begin{abstract}
  Learning an effective representation of 3D point clouds requires a good metric to measure the discrepancy between two 3D point sets, which is non-trivial due to their irregularity.
  Most of the previous works resort to using the Chamfer discrepancy or Earth Mover's distance, but those metrics are either ineffective in measuring the differences between point clouds or computationally expensive.
  In this paper, we conduct a systematic study with extensive experiments on distance metrics for 3D point clouds.
  From this study, we propose to use sliced Wasserstein distance and its variants for learning representations of 3D point clouds.
  In addition, we introduce a new algorithm to estimate sliced Wasserstein distance that guarantees that the estimated value is close enough to the true one.
  Experiments show that the sliced Wasserstein distance and its variants allow the neural network to learn a more efficient representation compared to the Chamfer discrepancy.
  We demonstrate the efficiency of the sliced Wasserstein metric and its variants on several tasks in 3D computer vision including training a point cloud autoencoder, generative modeling, transfer learning, and point cloud registration.
\end{abstract}

\section{Introduction}

Since the spark of the modern artificial intelligence, 3D deep learning on point clouds has become a powerful technique for solving recognition problems such as object classification~\cite{qi2017pointnet,hua2018pointwise}, object detection~\cite{qi2019votenet}, and semantic segmentation~\cite{pham2019jsis3d}.
Generative modeling with 3D point clouds has also been studied with some promising results~\cite{yang2019pointflow,shu2019tree,li2018pcgan,hui2020pdgn,sun2020pointgrow,lin2018learning}.
Another 3D computer vision problem that has seen the rise of deep learning approaches is point cloud matching~\cite{choy2019fcgf,deng2018ppfnet,deng2018ppffoldnet,gojcic2019perfectmatch}.
All of these problems share a common task --- that is to learn a robust representation of 3D point clouds.

One of the most important steps in learning representations of 3D point clouds is to choose a metric to measure the discrepancy between two point sets.
There are two 
popular choices for such metric: the Chamfer 
divergence and the Earth Mover's distance (EMD)~\cite{fan2017pointset}.
While earlier works~\cite{fan2017pointset,achlioptas2018pointnetae} has shown that EMD performs better than Chamfer in terms of learning representations,
Chamfer is more 
favored~\cite{deng2018ppffoldnet,yang2018foldingnet,groueix2018atlasnet,duan2019denoising,deng2019locfeat,hermosilla2019total} due to its significantly lower computational cost.

In this article, we revisit the similarity metric problem in 3D point cloud deep learning. We propose to use the sliced Wasserstein distance (\textsc{SWD})~\cite{bonneel2015sliced}, which is based on projecting the points in point clouds 
into a line, and its variants as effective metrics to supervise 3D point cloud autoencoders. Compared to Chamfer 
divergence, SWD is more suitable for point cloud reconstruction, while remaining computationally efficient (cf. Figure~\ref{fig:teaser}). 
We show that Chamfer divergence is weaker than the EMD and sliced Wasserstein distance (cf. Lemma~\ref{lemma:relation_Chamfer_Wasserstein}) while the EMD and sliced Wasserstein distance are equivalent. It suggests that even when two point clouds are close in Chamfer divergence, they may not be close in either the EMD or sliced Wasserstein distance. Furthermore, the sliced Wasserstein distance has a computational complexity in the order of $N \log N$~\cite{bonneel2015sliced}, which is comparable to that of the Chamfer divergence, while EMD has a complexity in the order of $N^3$~\cite{Pele-2009-Fast} where $N$ is the number of points in 3D point clouds. Finally, under the standard point clouds settings, since the dimension of points is usually three, the projection step in sliced Wasserstein distance will only lead to small loss of information of the original point clouds. As a consequence, the sliced Wasserstein distance possesses both the computational and statistical advantages for point cloud learning over Chamfer divergence and EMD. To improve the quality of slices from SWD, we also discuss variants of sliced Wasserstein distance, including max-sliced Wasserstein distance~\cite{deshpande2019max} and the proposed adaptive sliced Wasserstein algorithm.
By conducting a case study on learning a 3D point cloud auto-encoder, we provide a comprehensive benchmark on the performance of different metrics.
These results align with our theoretical development.
In summary, our main findings are:

\begin{itemize}
\item A first theoretical study about the relation between Chamfer divergence, EMD, and sliced Wasserstein distance for point cloud learning (Section \ref{sec:main_cont}).
\item A new algorithm named Adaptive-sliced Wasserstein to evaluate sliced Wasserstein distance that guarantees that the evaluated value is close enough to the true one (Section \ref{sec:main_cont}).
\item An extensive evaluation of point cloud learning tasks including point cloud reconstruction, transfer learning, point cloud registration and generation based on Chamfer discrepancy, EMD, sliced Wasserstein distance and its variants (Section \ref{sec:exp}).
\end{itemize}

\section{Related Work}

\paragraph{Set similarity.}
3D point clouds autoencoders are useful in a wide range of applications, such as denoising~\cite{hermosilla2019total}, 3D matching~\cite{zhao2019pointcapsule,deng2018ppffoldnet,deng2019locfeat, Li_2019_CVPR}, and generative models~\cite{fan2017pointset,achlioptas2018pointnetae}.
Many autoencoder architectures have been proposed in recent few years~\cite{achlioptas2018pointnetae,yang2018foldingnet,deng2018ppffoldnet}.
To train these autoencoders, there are two popular choices of losses: the Chamfer discrepancy (CD) and Earth Mover's distance (EMD).
The Chamfer discrepancy has been widely used in point cloud deep learning~\cite{fan2017pointset,achlioptas2018pointnetae,yang2018foldingnet}.

It is known that Chamfer discrepancy (CD) is not a distance that means there are two different point clouds with its CD almost equals zero.
While earlier works~\cite{fan2017pointset,achlioptas2018pointnetae} showed that EMD is better than Chamfer in 3D point clouds reconstruction task,
recent works~\cite{pham2020lcd} still favor Chamfer 
discrepancy due to its fast computation. 

\vspace{0.5 em}
\noindent
\textbf{Wasserstein distance.} In 2D computer vision, the family of Wasserstein distances and their sliced-based versions have been considered in the previous works~\cite{Arjovsky-2017-Wasserstein, ho2017multilevel, tolstikhin2018wasserstein, deshpande2018generative,deshpande2019max,wu2019sliced,ho2021lamda,Ho_JMLR}. In particular, Arjovsky et al.~\cite{Arjovsky-2017-Wasserstein} proposed using Wasserstein as the loss function in generative adversarial networks (GANs) while Tolstikhin et al.~\cite{tolstikhin2018wasserstein} proposed using that distance for the autoencoder framework. Nevertheless, Wasserstein distances, including the EMD, have expensive computational cost and can suffer from the curse of dimensionality, namely, the number of data required to train the model will grow exponentially with the dimension. To deal with these issues of Wasserstein distances, a line of works has utilized the slicing approach to reduce the dimension of the target probability measures. The notable slicing distance is sliced Wasserstein distance~\cite{bonneel2015sliced}. Later, the idea of sliced Wasserstein distance had been adapted to the autoencoder setting~\cite{kolouri2018_sliced} and domain adaptation~\cite{Lee_2019_CVPR}. 
Deshpande et al.~\cite{deshpande2018generative} proposed to use the max-sliced Wasserstein distance, a version of sliced Wasserstein distance when we only choose the best direction to project the probability measures, to formulate the training loss for a generative adversarial network. 
The follow-up work~\cite{deshpande2019max,wu2019sliced} has an improved projection complexity compared to the sliced Wasserstein distance. In the recent work, Nguyen et al.~\cite{Nguyen_distributional, Nguyen_vmf} proposed a probabilistic approach to chooses a number of important directions via finding an optimal measure over the projections. 
Another direction with sliced-based distances is by replacing the linear projections with non-linear projections to capture more complex geometric structures of the probability measures~\cite{gsw_nips19}. 
However, to the best of our knowledge, none of such works have considered the problem of learning with 3D point clouds yet.\\
\noindent
\textbf{Notation.}
Let $\mathbb{S}^{n-1}$ be the unit sphere in the $n$-dimensional space. For a metric space $(\Omega,d)$ and two probability measures $\mu$ and $\nu$ on $\Omega$, let $\Pi(\mu,\nu)$ be the set of all joint distributions $\gamma$ such that its marginal distributions are $\mu$ and $\nu$, respectively. For any $\theta \in \mathbb{S}^{n - 1}$ and any measure $\mu$, $\pi_{\theta} \sharp \mu$ denotes the pushforward measure of $\mu$ through the mapping $\mathcal{R}_{\theta}$ where $\mathcal{R}_{\theta}(x) = \theta^{\top} x$ for all $x$.


\section{Background}

To study the performance of different metrics for point cloud learning, 
we briefly review the mathematical foundation of the Chamfer discrepancy  and the Wasserstein distance, which serve as the key building blocks in this paper. Note that in computer vision, Chamfer is often abused to be a distance. Strictly speaking, Chamfer is a pseudo-distance, \emph{not} a distance~\cite{fan2017pointset}. Therefore,
in this paper, we use the terms Chamfer discrepancy or Chamfer divergence instead. 

\subsection{Chamfer discrepancy}
In point cloud deep learning, Chamfer discrepancy has been adopted for many tasks. There are some variants of Chamfer discrepancy, which we provide here for completeness.
For any two point clouds $P, Q$, a common formulation of the Chamfer discrepancy between $P$ and $Q$ is given by:
\begin{equation}
    \label{chamfer_cvpr17}
d_{\text{CD}}(P, Q) = \frac{1}{|P|}\sum_{x\in P}\min_{y\in Q}||x - y||^2_2 + \frac{1}{|Q|}\sum_{y\in Q}\min_{x\in P}||x - y||^2_2.
\end{equation}
A slightly modified version of Chamfer divergence is also used by previous works~\cite{yang2018foldingnet,deng2018ppffoldnet,deng2019locfeat,aumentado2019geometric} that replaces the sum by a $\max$ function:
\begin{align}
    \label{modified_chamfer}
    d_{\text{MCD}}(P, Q) = \max\Big\{&\frac{1}{|P|}\sum_{x\in P}\min_{y\in Q}||x - y||^2_2, \nonumber \\ 
    &\frac{1}{|Q|}\sum_{y\in Q}\min_{x\in P}||x - y||^2_2\Big\}.
\end{align}
In both definitions, the $\min$ function means that Chamfer discrepancy only cares about the nearest neighbour of a point rather than the distribution of those nearest points.
Hence, as long as the supports of $x$ and $y$ are close then the corresponding Chamfer discrepancy between them is small, meanwhile their corresponding distributions could be different.
A similar phenomenon, named Chamfer blindness, was shown in~\cite{achlioptas2018pointnetae}, pointing out that Chamfer discrepancy fails to distinguish bad sample from the true one, since it is less discriminative.

\subsection{Wasserstein distance}
Let $(\Omega, d)$ be a metric space, $\mu, \nu$ are probability measures on $\Omega$. 
For $p\geq1$, the $p$-Wasserstein distance (WD) is given by
\begin{equation*}
    W_p(\mu, \nu)=\underset{\gamma\in\Pi(\mu, \nu)}{\inf}\Big\{\mathbb{E}_{(X, Y)\sim\gamma}\big[d^p(X, Y)\big]\Big\}^{\frac{1}{p}}.
\end{equation*}
For $p=1$, the Wasserstein distance becomes the Earth Mover's Distance (EMD), where the optimal joint distribution $\gamma$ induces a map  $T:\mu\rightarrow \nu$, which preserves the measure on any measurable set $B \subset \Omega$.
In the one-dimensional  case $\Omega=\mathbb{R}$ and $d(x,y):=|x-y|$, the WD has the following  closed-form formula:
\begin{equation}
    \label{equation:wd1d}
    W_p(\mu, \nu) = \Big(\int_0^1\big|F^{-1}_X(t) - F^{-1}_Y(t)\big|^p, dt\Big)^{\frac{1}{p}},
\end{equation}
where $F_{X}$ and $F_{Y}$ are respectively the cumulative distribution functions of random variables $X$ and $Y$. When the dimension is greater than one, there is no closed-form for the WD, that makes calculating the WD more difficult.  

\vspace{0.5 em}
\noindent
\textbf{EMD in 3D point-cloud applications.} For the specific settings of 3D point clouds, the EMD had also been employed to define a metric between two point clouds~\cite{fan2017pointset,achlioptas2018pointnetae,yu2018punet}. With an abuse of notation, for any two given point clouds $P$ and $Q$, throughout this paper, we denote its measure representation as follows: $P = \frac{1}{|P|} \sum_{x \in P} \delta_{x}$ and $Q = \frac{1}{|Q|} \sum_{y \in Q} \delta_{y}$ where $\delta_{x}$ denotes the Dirac delta distribution at point $x$ in the point cloud $P$. 
When $|P|=|Q|$, the Earth Mover's distance~\cite{fan2017pointset,achlioptas2018pointnetae,yu2018punet} between $P$ and $Q$ is defined as
\begin{equation}
    \label{emd_cvpr17}
    d_{\text{EMD}} (P, Q) = \underset{T:P\to Q}{\min}\sum_{x\in P}||x - T(x)||_2.
\end{equation}
While earlier works~\cite{fan2017pointset,achlioptas2018pointnetae} showed that EMD is better than Chamfer in 3D point clouds reconstruction task, the computation of EMD can be very expensive compared to the Chamfer divergence. In particular, it had been shown that the practical computational efficiency of EMD is at the order of $\max\{|P|, |Q|\}^3$~\cite{Pele-2009-Fast}, which can be expensive. There is a recent line of work using entropic version of EMD or in general Wasserstein distances~\cite{cuturi2013sinkhorn} to speed up the computation of EMD. However, the best known practical complexity of using the entropic approach for approximating the EMD is at the order $\max\{|P|, |Q|\}^2$~\cite{lin2019efficient, Linacceleration}, which is still expensive and slower than that of Chamfer divergence. Therefore, it necessitates to develop a metric between 3D point clouds such that it not only has equivalent statistical properties as those of EMD but also has favorable computational complexity similar to that of the Chamfer divergence.











\section{Sliced Wasserstein Distance and its variants}
\label{sec:main_cont}
In this section, we first show that Chamfer divergence is a weaker divergence than Earth Mover's distance in Section~\ref{sec:curse_dimensionality}. Since Earth Mover's distance can be expensive to compute, we propose using sliced Wasserstein distance, which is equivalent to Wasserstein distance and has efficient computation, as an alternative to EMD in Section~\ref{subsec:sliced_Wasserstein}. Finally, we propose a new algorithm to compute sliced-Wasserstein that can guarantee the estimated value is close enough to the true one in Section~\ref{subsec:asw}.

\subsection{Relation between Chamfer divergence and Earth Mover's distance}
\label{sec:curse_dimensionality}
In this section, we study the relation between Chamfer and EMD when $|P| = |Q|$. In particular,  the following inequality shows that the Chamfer divergence is weaker than the Wasserstein distance.
\begin{lemma}
\label{lemma:relation_Chamfer_Wasserstein}
Assume  $|P| = |Q|$ and the support of $P$ and $Q$ is bounded in a convex hull of diameter $K$, then  we find that
\begin{align}
    d_{CD}(P, Q) \leq 2 K d_{EMD}(P, Q).
\end{align}
\end{lemma}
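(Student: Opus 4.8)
The plan is to use the optimal transport map underlying $d_{EMD}$ to supply a concrete, generally sub-optimal, assignment for each of the two nearest-neighbour terms in $d_{CD}$. First I would record that, because $|P| = |Q|$, the minimizing $T$ in the definition of $d_{EMD}(P,Q)$ is a bijection between $P$ and $Q$; fix such an optimal map $T^*$. Since $T^*(x) \in Q$ for every $x \in P$, the true nearest neighbour of $x$ in $Q$ can only be closer, so $\min_{y \in Q}\|x - y\|_2^2 \le \|x - T^*(x)\|_2^2$; symmetrically, using the inverse bijection, $\min_{x \in P}\|x - y\|_2^2 \le \|(T^*)^{-1}(y) - y\|_2^2$ for every $y \in Q$.

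The crucial step, where the diameter $K$ enters, is to downgrade the squared distances appearing in Chamfer to the first-power distances appearing in EMD. Because all points lie in a convex hull of diameter $K$, we have $\|x - T^*(x)\|_2 \le K$, hence $\|x - T^*(x)\|_2^2 = \|x - T^*(x)\|_2 \cdot \|x - T^*(x)\|_2 \le K\,\|x - T^*(x)\|_2$. Inserting this into both sums converts each squared summand into $K$ times a linear summand, and those linear summands are exactly the terms that constitute $d_{EMD}$.

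Combining these observations, I would bound the first Chamfer sum by $\tfrac{K}{|P|}\sum_{x \in P}\|x - T^*(x)\|_2$ and the second by $\tfrac{K}{|Q|}\sum_{y \in Q}\|(T^*)^{-1}(y) - y\|_2$. Re-indexing the second sum by $x = (T^*)^{-1}(y)$ shows it equals $\sum_{x \in P}\|x - T^*(x)\|_2 = d_{EMD}(P,Q)$, identical to the first. With $|P| = |Q| = N$, the two contributions add up to $\tfrac{2K}{N}\,d_{EMD}(P,Q)$, which, since $N \ge 1$, is at most $2K\,d_{EMD}(P,Q)$, giving the stated bound.

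I expect the only genuine subtlety to be the squared-versus-linear mismatch between the two discrepancies: Chamfer aggregates squared Euclidean distances while EMD aggregates ordinary Euclidean distances, so no scale-free inequality between them can hold, and the boundedness assumption is precisely what supplies the conversion factor $K$. The bijection and inverse-bijection bookkeeping needed to handle the two asymmetric halves of $d_{CD}$ with a single optimal map is routine once $T^*$ is fixed.
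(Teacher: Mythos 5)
Your proof is correct and follows essentially the same route as the paper's: bound each nearest-neighbour term of $d_{CD}$ by the cost of the optimal transport map, use the diameter bound $\|x - T^*(x)\|_2 \leq K$ to convert each squared distance into $K$ times a linear one, and sum. The only cosmetic difference is that you handle the second Chamfer term via the inverse bijection $(T^*)^{-1}$, whereas the paper invokes a separate optimal plan $\bar{T}$ from $Q$ to $P$; these have identical cost, so the arguments coincide (and both, as you note, actually yield the slightly stronger constant $2K/N$).
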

\begin{proof}
Assume that $T$ is the optimal plan from $P$ to $Q$. Then, we find that
\begin{align*}
  \min_{y\in Q} \|x-y\|_2&\leq \|x - T(x)\|_2\\
\Rightarrow\min_{y\in Q} \|x-y\|^2_2 &\leq  K \|x - T(x)\|_2,
\end{align*}
since $\|x - T(x)\|_2 \leq K$.
Taking the sum over $x$, we obtain
\begin{align*}
     \sum_{x\in P} \min_{y\in Q} \|x-y\|_2^2 \leq K \sum_{x\in P} \|x-T(x)\|_2.
\end{align*}
Similarly we have  $\sum_{y\in Q} \min_{x\in P} \|x-y\|_2^2 \leq K \sum_{y\in Q} \|y-\bar{T}(y)\|_2$ where $\bar{T}$ is the optimal plan from $Q$ to $P$. Then we obtain the desired inequality.
\end{proof}
The inequality in Lemma~\ref{lemma:relation_Chamfer_Wasserstein} implies that minimizing the Wasserstein distance leads to a smaller Chamfer discrepancy, and the reverse inequality is not true. Therefore, EMD and Chamfer divergence are not equivalent, which can be undesirable. Note that the inequality  could not be improved to a better order of $K$. For example, lets consider two point-clouds that have small variance  $\epsilon^2$, meanwhile the distance between two point-cloud centers equals $K - O(\epsilon)$. Then the CD is of order $K^2$, the EMD is of order $K$.

Lemma 1 shows that Chamfer divergence is weaker than the EMD, which is in turn weaker than other divergences such as, KL, chi-squared, etc.~\cite[pp.117]{peyre2020computational}. However, Chamfer discrepancy has weakness as we explained before, and other divergences are not as effective as EMD, since they are very loose even when two distributions are close to each other~\cite{Arjovsky-2017-Wasserstein}. Hence, the Wasserstein/EMD is a preferable metric for learning the discrepancy between two distributions.

Despite that fact, computing EMD can be quite expensive since it is equivalent to solving a linear programming problem, which has the best practical computational complexity of the order $\mathcal{O}(\max \{|P|^3, |Q|^3\})$~\cite{Pele-2009-Fast}. On the other hand, the computational complexity of Chamfer divergence can be scaled up to the order $\mathcal{O}(\max \{|P|, |Q|\})$. Therefore, Chamfer divergence is still preferred in practice due to its favorable computational complexity. 

Given the above observation, ideally, we would like to utilize a distance between $P$ and $Q$ such that it is equivalent to the EMD and has linear computational complexity in terms of $\max \{|P|, |Q|\}$, which is comparable to the Chamfer divergence. It leads us to the notion of sliced Wasserstein distance in the next section.

\subsection{Sliced Wasserstein distance and its variants}
\label{sec:sliced_Wasserstein}
In order to circumvent the high computational complexity of EMD, the sliced Wasserstein distance~\cite{bonneel2015sliced} is designed to exploit the 1D formulation of Wasserstein distance in Equation~\eqref{equation:wd1d}. 
\subsubsection{Sliced Wasserstein distance} 
\label{subsec:sliced_Wasserstein}
In particular, the idea of sliced Wasserstein distance is that we first project both target probability measures $\mu$ and $\nu$ on a direction, says $\theta$, on the unit sphere to obtain two projected measures denoted by $\pi_{\theta} \sharp \mu$ and $\pi_{\theta} \sharp \nu$, respectively. Then, we compute the Wasserstein distance between two projected measures $\pi_{\theta} \sharp \mu$ and  $\pi_{\theta} \sharp \nu$. The sliced Wasserstein distance (SWD) is defined by taking the average of the Wasserstein distance between the two projected measures over all possible projected direction $\theta$. In particular, for any given $p \geq 1$, the sliced Wasserstein distance of order $p$ is formulated as follows:
\begin{equation}
    SW_p(\mu, \nu) = \Big( \int_{\mathbb{S}^{n-1}} W_p^p\big(\pi_\theta \sharp\mu, \pi_\theta \sharp \nu)d\theta \Big)^{\frac{1}{p}}. \label{eq:sliced_Wasserstein}
\end{equation}
The $SW_p$ is considered as a low-cost approximation for Wasserstein distance as its computational complexity is of the order 
$\mathcal{O}(n \log n)$ where $n$ is the maximum number of supports of the discrete probability measures $\mu$ and $\nu$. When $p = 1$, the $SW_{p}$ is weakly equivalent to first order WD or equivalently EMD~\cite{bayraktar2019strong}. 
The equivalence between $SW_{1}$ and EMD along with the result of Lemma~\ref{lemma:relation_Chamfer_Wasserstein} suggests that $SW_{1}$ is stronger metric than the Chamfer divergence while it has an appealing optimal computational complexity that is linear on the number of points of point clouds, which is comparable to that of Chamfer divergence. 

We  would like to remark that since the dimension of points in point clouds is generally small ($\leq 6$), sliced Wasserstein distance will still be able to retain useful information of the point clouds even after we project them to some direction on the sphere. Due to its favorable computational complexity, sliced Wasserstein distance had been used in several applications: point cloud registration \cite{lai2014multiscale}, generative models on 2D images; see, for examples \cite{ort_est_wd_aistats19,asy_gua_gm_sw_nips19,deshpande2018generative,kolouri2018_sliced,sw_gau_mix_cvpr18,wu2019sliced,sw_ker_pro_dis_cvpr16}. However, to the best of our knowledge, this distance has not been used for deep learning tasks on 3D point clouds.

\vspace{0.5 em}
\noindent
\textbf{Monte Carlo estimation.}
In Equation~\eqref{eq:sliced_Wasserstein}, the integral is generally intractable to compute. Therefore, we need to approximate the integral. A common approach to approximate the integral is by applying the Monte Carlo method. In particular, we sample $N$ directions $\theta_{1}, \ldots, \theta_{N}$ uniformly from the sphere 
$\mathbb{S}^{\texttt{d}-1}$ where $\texttt{d}$ is the dimension of points in point clouds, which results in the following approximation of sliced Wasserstein distance:
\begin{align}
    SW_p(\mu,\nu) \approx \Big(\frac{1}{N} \sum_{i=1}^N W_p^p\big(\pi_{\theta_i} \sharp\mu, \pi_{\theta_i} \sharp \nu\big) \Big)^{\frac{1}{p}}. \label{eq:approximation_SW}
\end{align}
where the number of slices $N$ is tuned for the best performance. 

Since $N$ plays a key role in determining the approximation of sliced Wasserstein distance, it is usually chosen based on the dimension of the probability measures $\mu$ and $\nu$. In our applications with 3D point clouds, since the dimension of the points in point clouds is generally small, we observe that choosing the number of projections $N$ up to 100 is already sufficient for learning 3D point clouds well.

\vspace{0.5 em}
\noindent
\textbf{Max-sliced Wasserstein distance.} To avoid using uninformative slices in SWD, another approach focuses on taking only the best slice in discriminating between two given distributions. That results in max-sliced Wasserstein distance~\cite{deshpande2019max}. 
For any $p \geq 1$, the \emph{max-sliced Wasserstein distance} of order $p$ is given by:
\begin{equation}
    MSW_p(\mu, \nu) : = \max_{\theta \in \mathbb{S}^{n-1}} W_p \big(\pi_\theta \sharp\mu, \pi_\theta \sharp \nu). \label{eq:max_sliced_Wasserstein}
\end{equation}

\subsubsection{An adaptive-sliced Wasserstein algorithm}
\label{subsec:asw}
Another drawback of the Monte Carlo estimation in SWD is that it does not give information about how close the estimated value is to the true one. Therefore, we introduce the novel \emph{adaptive-sliced Wasserstein algorithm} (ASW). 
In particular, given $N$ uniform random projections $\{\theta_i \}_1^{N}$ drawn from the sphere $\mathbb{S}^{n-1}$, for the simplicity of presentation, we denote $sw_{i} = W_p^p\big(\pi_{\theta_{i}} \sharp\mu, \pi_{\theta_{i}} \sharp \nu)$ for all $1 \leq i \leq N$. Furthermore, we denote $\overline{sw} = SW_p^{p}(\mu,\nu)$ the true value of SW distance that we want to compute. The Monte Carlo estimation of the SW distance can be written as follows: $\overline{sw}_N := \frac{1}{N}\sum_{i=1}^{N}sw_i$,
which is an unbiased estimator of the true value $\overline{sw}$. Similarly, the biased and unbiased variance estimates are respectively defined as:
\begin{align}
s^2_N := \frac{1}{N} \sum_{i=1}^{N}\big(sw_i - \overline{sw}_N \big)^2 , \quad \bar{s}^2_N := \frac{N}{N-1} s^2_N.
\label{eqn:biased_var}
\end{align}
Our idea of adaptivity is to dynamically determine the number of projections $N$ from the observed mean and variance of the estimators. To do so, we leverage a probabilistic bound of the error of the estimator and choose $N$ such that the error bound is below a certain tolerance threshold. Particularly, applying central limit theorem, we have
\begin{align}
\label{k-sigma}
    \mathbb{P}\Big(|\overline{sw}_N - \overline{sw}|<\frac{k\bar{s}_N}{\sqrt{N}}\Big)\approx \phi(k) - \phi(-k)
\end{align}
where $\phi(k) := \int_{-\infty}^{k}\frac{e^{-x^2/2}}{\sqrt{2\pi}}dx$. We set $k = 2$ so that the above probability is around 95\%.

\begin{algorithm}[t]
\textbf{Input}: Two point sets, positive integers $N_0, s$; $\epsilon > 0$; maximum number of projections $M$\\
\textbf{Ouput}: $\overline{sw}_N$ \\
 Sample $N_0$ projections \;
 Compute $\overline{sw}:=\overline{sw}_{N_0}, \overline{sw^2}:=\overline{sw^2}_{N_0}, N:=N_0$ \;
 \While{$\overline{sw^2} - (\overline{sw})^2 > \frac{(N - 1)\epsilon^2}{4}$ \& $N \leq M$}{
  Sample $s$ projections \;
  Compute $\overline{sw}_{s}, \overline{sw^2}_{s}$ \;
  Assign $\overline{sw} := \frac{N \times \overline{sw} + s\times\overline{sw}_s}{N + s}$ \;
  Assign $\overline{sw^2} := \frac{N \times \overline{sw^2} + s\times\overline{sw^2}_s}{N + s}$ \;
  Assign $N := N + s$ \;
 }
\caption{Adaptive sliced Wasserstein.}
\label{alg:adaptive}
\end{algorithm}
\noindent
Given a predefined tolerance $\epsilon>0$, we aim for
\begin{align}
\frac{k\bar{s}_N}{\sqrt{N}}\leq\epsilon, \ \text{or} \ \frac{k^2\bar{s}^2_N}{N} \leq \epsilon^2. \label{stop_condition}
\end{align}
From Equation~(\ref{eqn:biased_var}), we note that $\frac{\bar{s}^2_N}{N} = \frac{s^2_N}{N - 1}$
and so it is desirable to choose $N$ such that $\frac{k^2 s^2_N}{N - 1} \leq \epsilon^2$. Rewriting the biased variance in Equation~(\ref{eqn:biased_var}), we get $s^2_N = \frac{1}{N}\sum_{i=1}^N sw_i^2 - (\overline{sw}_N)^2$. Denote $\overline{sw^2}_N := \frac{1}{N}\sum_{i=1}^N sw_i^2$, the condition becomes
\begin{align*}
    \overline{sw^2}_N - (\overline{sw}_N)^2 \leq \frac{(N-1)\epsilon^2}{k^2}.
\end{align*}
That leads us to the construction of Algorithm~\ref{alg:adaptive}. In this algorithm, we start by estimating the SWD with an initial number of projections $N = N_0$, and then dynamically update $N$ with extra $s$ samples by estimating the online mean and variance of the distance estimator until the estimated error satisfies the error bound.
We note that ASW algorithm can be used to compute other variants of SWD, such as, generalized sliced-Wasserstein distance~\cite{gsw_nips19}.

\section{Experiments}
\label{sec:exp}

\begin{figure}[t]
  \centering
  \includegraphics[width=\linewidth]{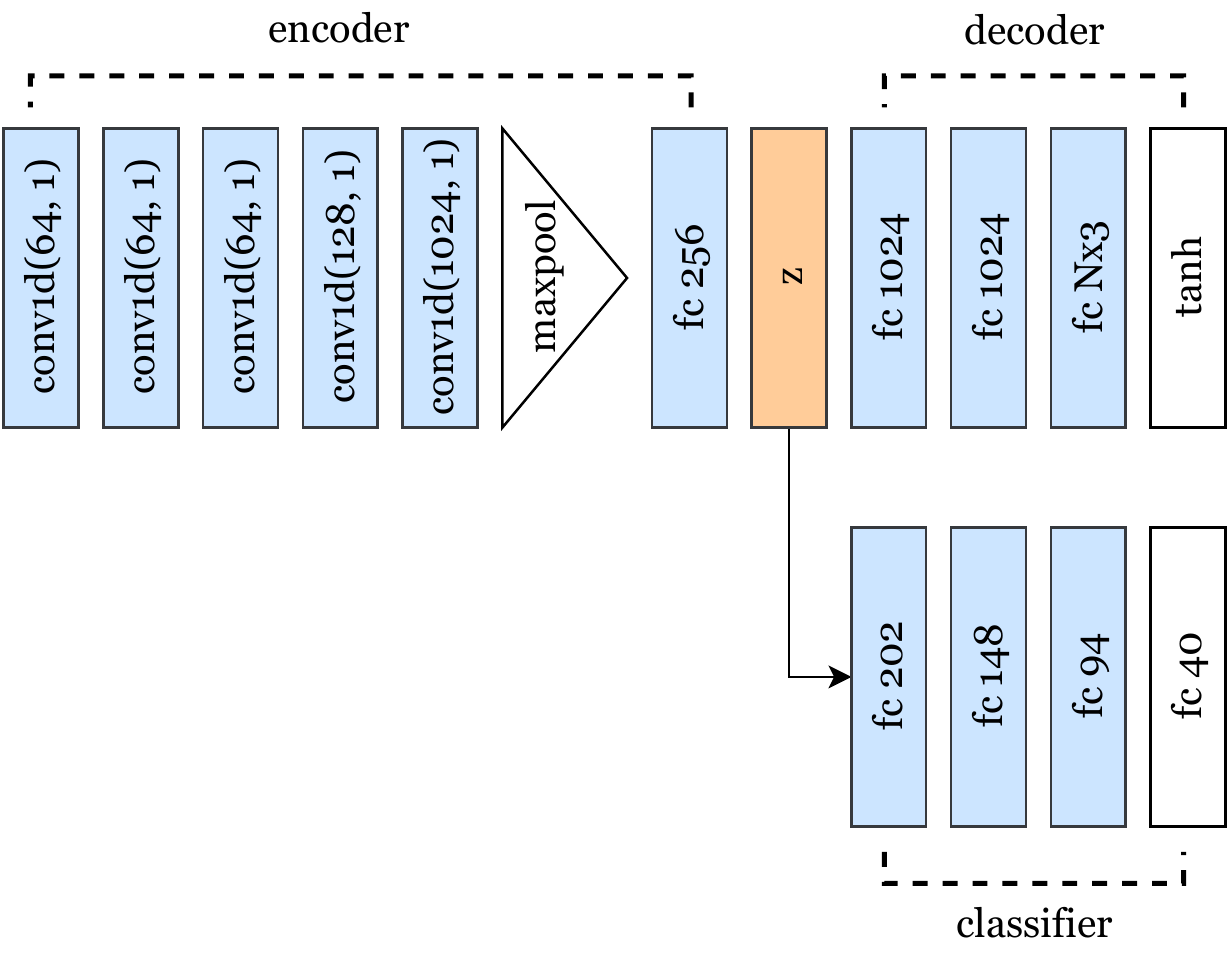}
  \caption{
    The network architecture of the autoencoder used in all of our experiments.
    The classifier is only used in the transfer learning experiment.
    All layers are followed by ReLU activation and batch normalization by default, except for the final layers.
  }
  \label{fig:network}
\end{figure}

In general, a good distance metric is expected to have good performance in a wide range of downstream tasks.
Here we compare the performance of different autoencoders trained with Chamfer discrepancy, Earth Mover's distance, sliced Wasserstein distance and max-sliced Wasserstein distance. 
We consider the following tasks in our evaluation: point cloud reconstruction, transfer learning, point cloud registration, and point cloud generation.

\vspace{0.5 em}
\noindent
\textbf{Implementation details.}
We follow the same architecture of the autoencoder used in~\cite{pham2020lcd}, which is based on PointNet~\cite{qi2017pointnet}, with 256-dimensional embedding space.
The architecture of our autoencoder is shown in Figure~\ref{fig:network}.
We train the autoencoder on the ShapeNet Core-55 dataset~\cite{chang2015shapenet} for 300 epochs with the following loss functions: Chamfer discrepancy (\textsc{CD-AE}), Earth-mover distance (\textsc{EMD-AE}), max-sliced Wasserstein distance (\textsc{MSW-AE}) and sliced Wasserstein distance. In the case which the autoencoder trained with sliced Wasserstein distance, we conduct experiments for each of two algorithms: Monte Carlo estimation and ASW. We will call these two autoencoders \textsc{SSW-AE} and \textsc{ASW-AE}, respectively. For Monte Carlo estimation, we set the number of slices 100. For ASW, we set the parameters in Algorithm~\ref{alg:adaptive} as follows: $N_0=2$, $s=1$, $\epsilon=0.5$ and $M=500$. Our models are trained with an SGD optimizer with an initial learning rate $0.001$, a momentum of $0.9$, and a weight decay of $0.0005$.
We use an NVIDIA V100 GPU for both training and evaluation, with batch size of 128 and a point cloud size of 2048.
\begin{table}[t]
  \centering
  \begin{tabular}{lccc}
    \toprule
    Method                 & CD             & SWD            & EMD            \\
    \midrule
    \textsc{CD-AE}         & 0.014          & 6.738          & 0.314          \\
    \textsc{EMD-AE}        & 0.014          & 2.295          & 0.114          \\
    \textsc{SSW-AE} (ours) & \textbf{0.007} & \textbf{0.831} & \textbf{0.091} \\
    \textsc{MSW-AE} (ours) & \textbf{0.007} & 0.865          & 0.093 \\
    \textsc{ASW-AE} (ours) & \textbf{0.007} & 0.854          & 0.092 \\
    \bottomrule
  \end{tabular}
  \caption{
    Quantitative measurements of the discrepancy between the input point clouds and their reconstructed versions on ModelNet40.
    We use Chamfer discrepancy (CD), sliced Wasserstein distance with 100 slices (SWD), and EMD as the evaluation metrics.
  }
  \label{tab:recon}
\end{table}

\begin{table}[t]
  \centering
  \begin{tabular}{lc}
    \toprule
    Method                 & Accuracy (\%) \\
    \midrule
    \textsc{CD-AE}         & 83.9          \\
    \textsc{EMD-AE}        & 84.4          \\
    \textsc{SSW-AE} (ours) & \textbf{86.8} \\
    \textsc{MSW-AE} (ours) & 86.5\\
    \textsc{ASW-AE} (ours) & \textbf{86.8} \\
    \bottomrule
  \end{tabular}
  \caption{
    Classification performance of different autoencoders on ModelNet40~\cite{wu2015modelnet}.
    Our proposed SW models can learn a better latent representation compared to Chamfer and EMD.
  }
  \label{tab:classification}
\end{table}

\vspace{0.5 em}
\noindent
\textbf{3D point cloud reconstruction.}
We test the reconstruction capability of the autoencoders on the ModelNet40 dataset~\cite{wu2015modelnet}.
We measure the differences between the original point clouds and their reconstructed versions using Chamfer discrepancy (CD), sliced Wasserstein distance with 100 slices (SWD), and EMD.
The results are shown in Table~\ref{tab:recon}.
Figure~\ref{fig:recon} shows the qualitative results from different autoencoders.
As can be seen, \textsc{CD-AE} performs well when being evaluated by Chamfer discrepancy, but not by SWD and EMD.
On the contrary, from Lemma~\ref{lemma:relation_Chamfer_Wasserstein}, minimizing Wasserstein distance leads to minimizing Chamfer distance as well.
Experiments with other Wasserstein metrics including generalized sliced Wasserstein~\cite{gsw_nips19} could be found in the supplementary material.

\begin{figure*}
  \centering
  \newcommand{\sizea}{0.164\linewidth}
  \includegraphics[width=\sizea]{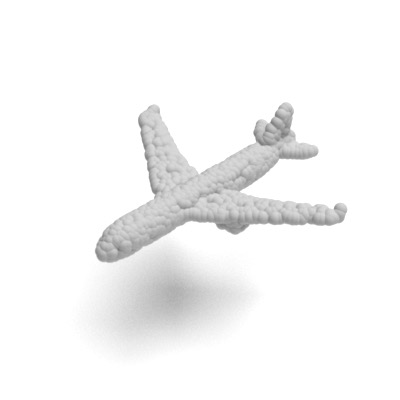}%
  \includegraphics[width=\sizea]{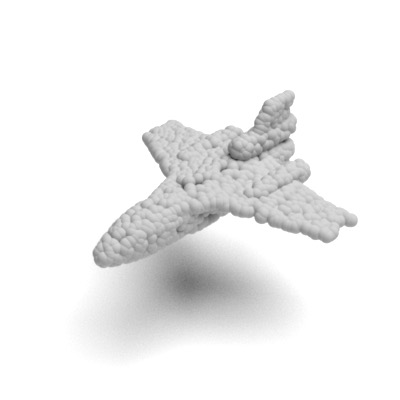}%
  \includegraphics[width=\sizea]{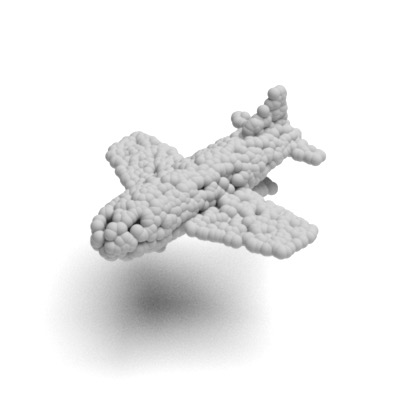}
  \includegraphics[width=\sizea]{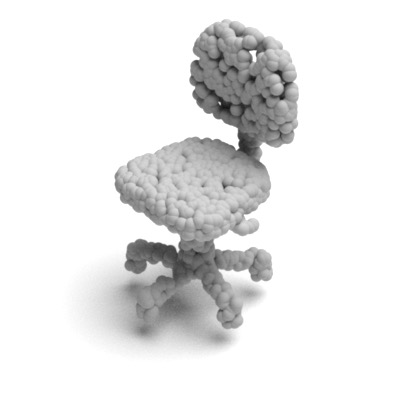}%
  \includegraphics[width=\sizea]{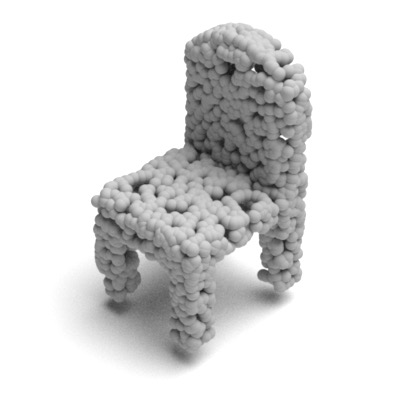}%
  \includegraphics[width=\sizea]{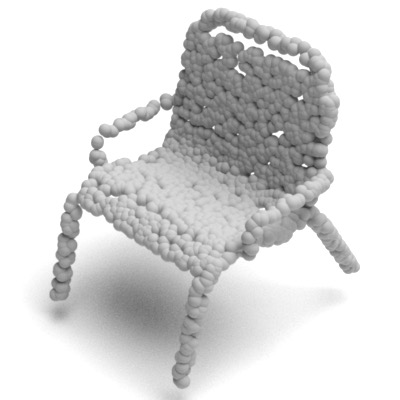}\\
  \includegraphics[width=\sizea]{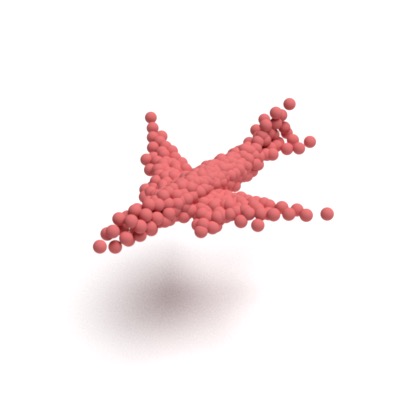}%
  \includegraphics[width=\sizea]{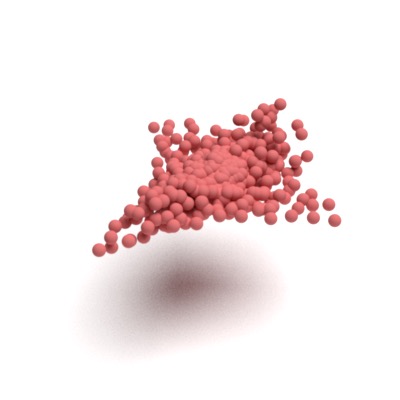}%
  \includegraphics[width=\sizea]{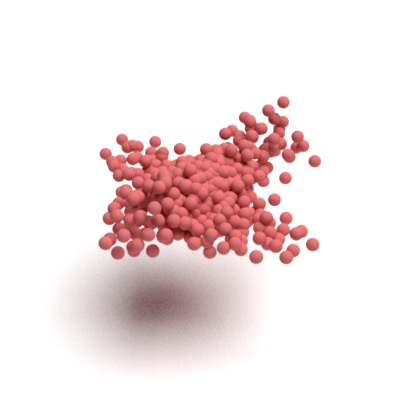}%
  \includegraphics[width=\sizea]{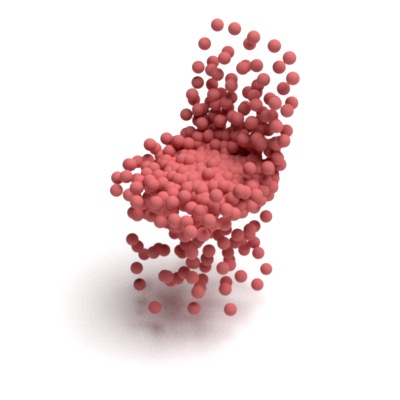}%
  \includegraphics[width=\sizea]{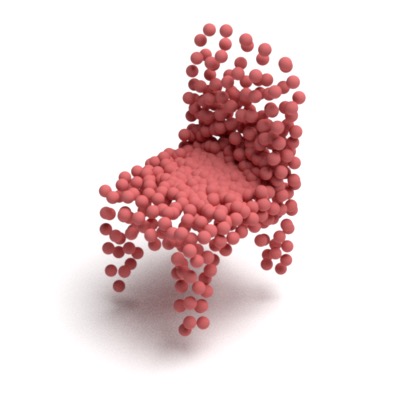}%
  \includegraphics[width=\sizea]{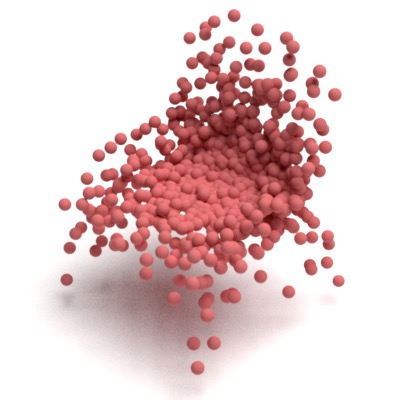}\\
  \includegraphics[width=\sizea]{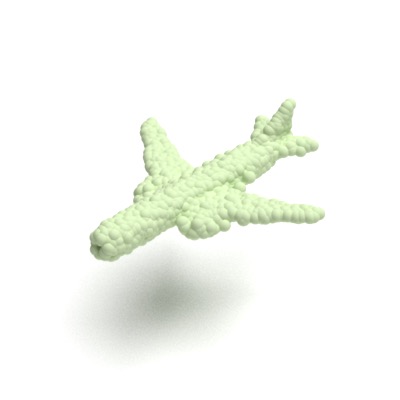}%
  \includegraphics[width=\sizea]{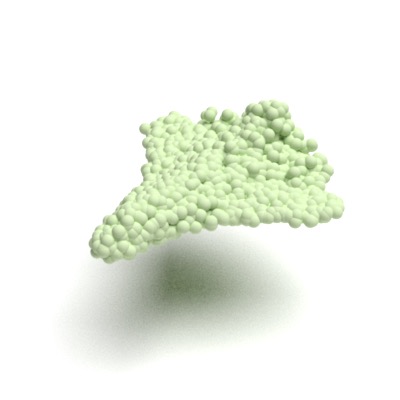}%
  \includegraphics[width=\sizea]{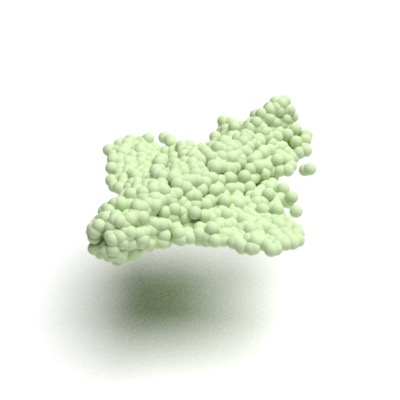}%
  \includegraphics[width=\sizea]{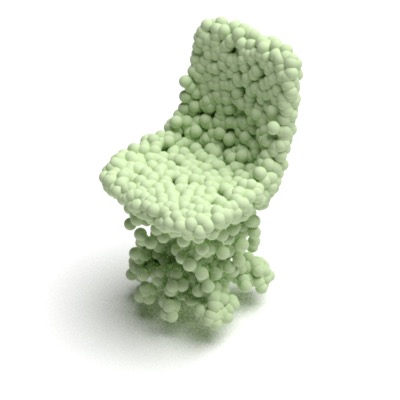}%
  \includegraphics[width=\sizea]{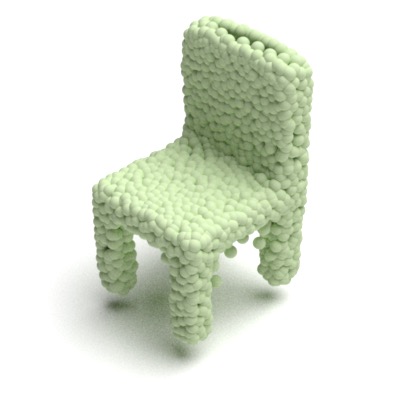}%
  \includegraphics[width=\sizea]{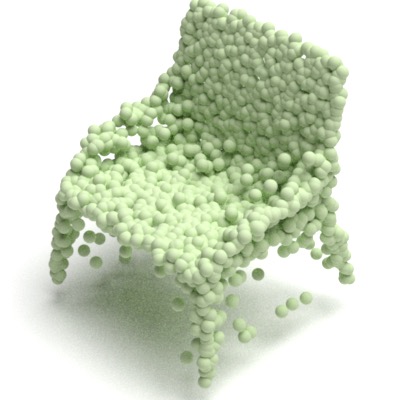}\\
  \includegraphics[width=\sizea]{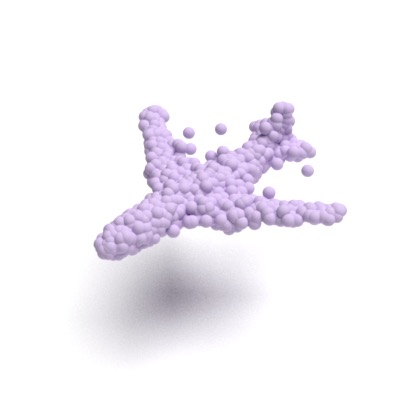}%
  \includegraphics[width=\sizea]{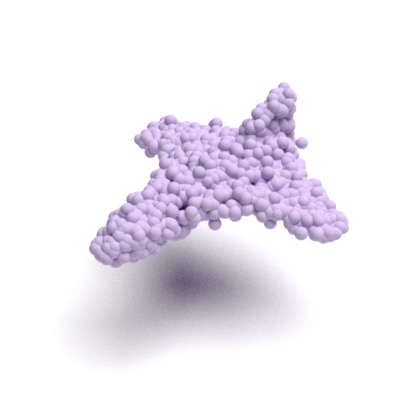}%
  \includegraphics[width=\sizea]{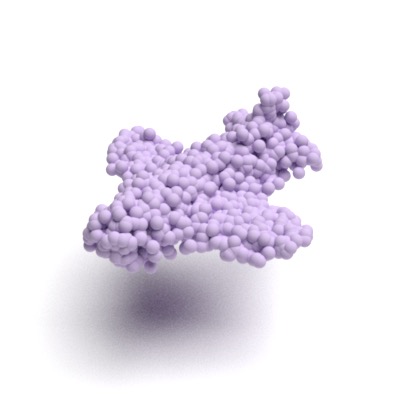}%
  \includegraphics[width=\sizea]{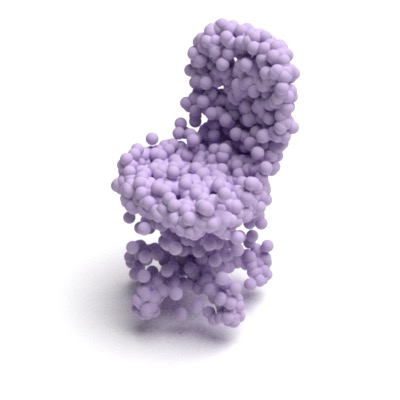}%
  \includegraphics[width=\sizea]{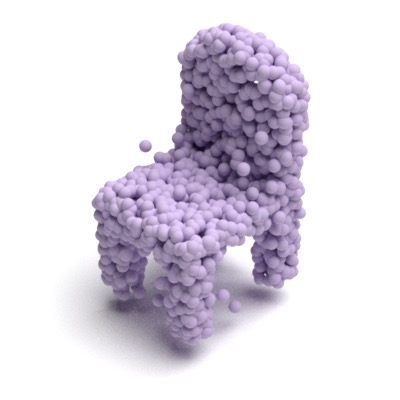}%
  \includegraphics[width=\sizea]{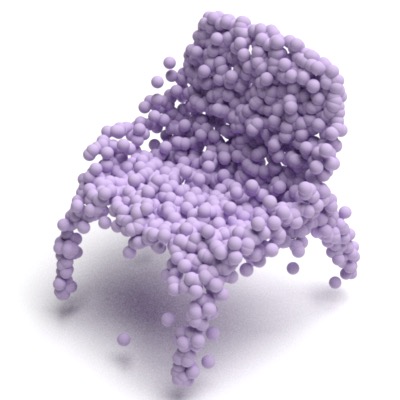}
  \caption{
    Qualitative results of autoencoders trained on single class using different loss functions.
    From top to bottom: input point clouds, \textsc{CD-AE} (red), \textsc{EMD-AE} (green) and \textsc{SSW-AE} (magenta).
    Compared to our models, \textsc{CD-AE} fails to reconstruct properly most of the 3D shapes.
  }
  \label{fig:recon}
\end{figure*}
\vspace{0.5 em}
\noindent
\textbf{Transfer learning.}
We further evaluate the performance of the autoencoders by using their latent vectors as features for classification.
Particularly, for an input 3D shape, we feed its point cloud into an autoencoder and extract the corresponding latent vector.
This vector is then classified by a classifier trained on the de-facto 3D classification benchmark of ModelNet40~\cite{wu2015modelnet}.
The architecture of the classifier is shown in Figure~\ref{fig:network}.
The input is a 256-dimension feature vector and the output is a 40-dimension vector representing the prediction scores of 40 classes in ModelNet40.
We train our networks for 500 epochs with a batch size of 256.
We use an SGD optimizer with $0.001$ learning rate, $0.9$ momentum, and $0.005$ weight decay.
The classification results are shown in Table~\ref{tab:classification}.
As can be seen, autoencoders trained with sliced Wasserstein distance outperformed both Chamfer and EMD. 
We further investigate performance of classifiers in case point clouds in ModelNet40 are perturbed by noise. We find that features learned with SWD are the most robust to noise, outperforming both CD and EMD by about 3\% of accuracy. Details can be found in the supplementary material.

\begin{table*}[t]
  \centering
  \begin{tabularx}{\textwidth}{l*{7}{Y}}
    \toprule
    &  \multirow{2}{*}{JSD ($\downarrow$)} & \multicolumn{2}{c}{MMD ($\downarrow$)} & \multicolumn{2}{c}{COV (\%, $\uparrow$)} & \multicolumn{2}{c}{1-NNA (\%, $\downarrow$)} \\
    \cmidrule(lr){3-4} \cmidrule(lr){5-6} \cmidrule(lr){7-8}
    Method                 &               & CD            & EMD            & CD             & EMD            & CD             & EMD            \\
    \midrule
    \textsc{CD-AE}         & 38.97         & 0.65          & 23.44          & 31.91          & 5.47           & 86.63          & 100.00         \\
    \textsc{EMD-AE}        & 3.73          & \textbf{0.61} & \textbf{10.44} & \textbf{35.75} & 35.75          & \textbf{86.34} & \textbf{87.96} \\
    \textsc{SSW-AE} (ours) & \textbf{3.24} & 0.79          & 11.22          & 28.51          & \textbf{37.96} & 91.43          & 91.80          \\
    \bottomrule
  \end{tabularx}
  \caption{
    Quantitative results of point cloud generation task on the chair category of ShapeNet.
    $\uparrow$: the higher the better, $\downarrow$: the lower the better.
    JSD, MMD-CD, and MMD-EMD scores are all multiplied by $10^2$.
  }
  \label{tab:pointgen}
\end{table*}

\vspace{0.5 em}
\noindent
\textbf{Point cloud generation.}
Next, we evaluate our method on the point cloud generation task.
Following~\cite{achlioptas2018pointnetae}, we split the chair category of ShapeNet into train/validation/test sets in a 85/5/10 ratio.
We train the autoencoders using different distance metrics for $10^4$ epochs.
After that, we train a generator on the latent space of an autoencoder, same as~\cite{achlioptas2018pointnetae}.
Our generators, parameterized by a multi-layer perceptron, learn to map a 64-dimensional vector drawn from a normal distribution $\mathcal{N}(0, \mathbb{I}_{64})$ to a latent code learned by an autoencoder $\mathbb{P}_{\text{latent}}$, where $\mathbb{I}_{64}$ is the $64 \times 64$ identity matrix.
We train the generators by minimizing the optimal transport distance between the generated and ground truth latent codes.
We report the quantitative in Table~\ref{tab:pointgen}.
We use the same evaluation metrics as proposed by~\cite{yang2019pointflow}. Qualitative results and results for MSW-AE and ASW-AE can be found in the supplementary.

\begin{table}[t]
  \centering
  \begin{tabular}{lcccc}
    \toprule
            & \textsc{CD-AE} & \textsc{EMD-AE} & \textsc{SSW-AE} & \textsc{CZK}~\cite{choi2015robust} \\
    \midrule
    home1   & 59.4           & \textbf{60.4}  & \textbf{60.4}            & 63.2                               \\
    home2   & 47.2           & 46.5  & \textbf{47.8}            & 40.3                               \\
    hotel1  & 62.6           & 62.1           & \textbf{69.8}   & 64.3                               \\
    hotel2  & 43.6           & 44.9           & 48.7            & \textbf{66.7}                      \\
    hotel3  & 46.2           & 34.6           & \textbf{65.4}   & 57.7                               \\
    kitchen & 58.4           & 57.0           & \textbf{62.6}   & 49.9                               \\
    lab     & 42.2           & 46.7           & \textbf{48.9}   & 37.8                               \\
    study   & 50.4           & 50.0           & \textbf{55.6}   & 54.7                               \\
    \midrule
    Average & 51.3           & 50.3           & \textbf{57.4}   & 54.3                               \\
    \bottomrule
  \end{tabular}
  \caption{
    3D registration results (recall) on the 3DMatch benchmark.
    We compare the models that trained with sliced Wasserstein (SW-AE) and squared sliced Wasserstein (SSW-AE) against Chamfer discrepancy (CD-AE) and a geometric-based approach (CZK). More details can be found in the suppplementary.
  }
  \label{tab:registration}
\end{table}


\vspace{0.5 em}
\noindent
\textbf{3D point cloud registration.}
Finally, we consider the problem of 3D point cloud registration.
In this problem, we need to estimate a rigid transformation between two 3D point clouds.
We follow the same setup as~\cite{pham2020lcd} and use the autoencoders for local feature extraction.
Evaluation is performed on the standard 3DMatch benchmark~\cite{zeng20173dmatch}.
We also compare our models against \textsc{CZK}~\cite{choi2015robust}, a method that used geometric features.
The final results are shown in Table~\ref{tab:registration}.
Our methods outperform other models by a good margin. Results for MSW-AE and ASW-AE can be found in the supplementary.


\noindent
\textbf{Runtime performance.}
We report the training time per iteration when training the autoencoder using Sliced Wasserstein with fixed number of slices, Chamfer and approximated EMD.
We train over $10^4$ iterations with a batch size of 128 and report the average timing.
For Chamfer and EMD, we use the implementation from~\cite{yang2019pointflow}.
Otherwise, we use our Python implementation.
Table~\ref{tab:runtime} shows the runtime performance of different metrics.
Our proposed sliced Wasserstein distance is as fast as Chamfer discrepancy, while being more accurate.
Compared to EMD, sliced Wasserstein distance is almost three times faster.
\begin{table}[t]
  \centering
  \begin{tabular}{lc}
    \toprule
    Distance & Runtime (ms) \\
    \midrule
    EMD     & 385 \\
    CD      & 120 \\
    SWD     & 138 \\
    \bottomrule
  \end{tabular}
  \caption{
    Training time per iteration in milliseconds of different distance functions.
    We compare the sliced Wasserstein distance (SWD) against Chamfer discrepancy (CD) and approximated EMD.
  }
  \label{tab:runtime}
\end{table}

\vspace{0.5 em}
\noindent
\textbf{Convergence rate.}
Our proposed sliced Wasserstein distance also has better convergence rate compared to Chamfer and EMD.
To demonstrate this point, we calculate the error during training using exact EMD.
Results in Table~\ref{tab:convergence} show that all of the SW variants converge much faster than Chamfer and EMD.
\begin{table}[t]
  \centering
  \begin{tabular}{lccc}
    \toprule
    Method          & Epoch 50         & Epoch 150        & Epoch 300        \\
    \midrule
    \textsc{CD-AE}  & 0.268            & 0.275            & 0.314 \\
    \textsc{EMD-AE} & 0.171            & 0.152            & 0.144 \\
    \textsc{SSW-AE} & $\mathbf{0.106}$   & $\mathbf{0.097}$   & $\mathbf{0.091}$ \\
    \textsc{MSW-AE} & 0.110            & 0.099            & 0.093 \\
    \textsc{ASW-AE} & 0.109            & 0.098            & 0.092 \\
    
    \bottomrule
  \end{tabular}
  \caption{
    Convergence rate of different distance metrics during training.
    We report the exact EMD errors at epoch 50, 150 and 300.
    Sliced Wasserstein distance has the best convergence rate.
  }
  \label{tab:convergence}
\end{table}

\noindent\textbf{Different architecture.}
We further performed experiments on point cloud capsule networks~\cite{zhao2019pointcapsule} to show that our method is agnostic to network architecture. As illustrated in Table~\ref{tab:recon-pcn}, SWD is agnostic to network architecture we tested, i.e., SWD also works well for the point cloud capsule network~\cite{zhao2019pointcapsule} (PCN), which has a very different design from the original PointNet. We trained PCN on ShapeNetCore55 dataset and tested on ModelNet40. As can be seen, using SWD can result in a slight performance improvement compared to Chamfer distance in the reconstruction task. Using SWD with PCN leads to a significant improvement of almost 2\% in the classification task.

\begin{table}[t]
  \small
  \centering
  \begin{tabular}{l|ccc|c}
    \toprule
    Method                 & CD             & SWD            & EMD     & Accuracy          \\
    \midrule
    \textsc{PCN-SSW}    & 0.006          & 0.761          & 0.084     & 88.78     \\
    \textsc{PCN-CD}     & 0.003          & 3.035          & 0.156     & 88.45     \\
    \bottomrule
  \end{tabular}
  \caption{
    Quantitative measurements of the discrepancy between the input point clouds and their reconstructed versions on ModelNet40. The last column is the classification accuracy on ModelNet40. 
  }
  \label{tab:recon-pcn}
  \vspace{-0.3cm}
\end{table}

\section{Conclusion}
In the paper, we propose using sliced Wasserstein distance 
for learning representation of 3D point clouds. We theoretically demonstrate that the sliced Wasserstein distance is equivalent to EMD while its computational complexity is comparable to Chamfer divergence. Therefore, it possesses both the statistical and computational benefits of EMD and Chamfer divergence, respectively. We also propose a new algorithm to approximate sliced Wasserstein distance between two given point clouds so that the estimation is close enough to the true value. Empirically, we show that the latent codes of the autoencoders learned using sliced Wasserstein distance are more useful for various downstream tasks than those learned using the Chamfer divergence and EMD.\\ 
\textbf{Acknowledgment.} Tam Le acknowledges the support of JSPS KAKENHI Grant number 20K19873.

{\small
  \bibliographystyle{ieee_fullname}
  \bibliography{references}
}

\newpage
\appendix

\newpage

\section*{Appendix}

In this supplemental material, we further test the robustness of the proposed metrics (Section~\ref{sec:robustness}), and then report numerical results on more metrics including adaptive-sliced Wasserstein (\textsc{ASW}), max sliced Wasserstein (\textsc{MSW}) and generalized sliced Wasserstein (\textsc{GSW}) (Section~\ref{sec:gsw}). We also report additional results on the point cloud registration (Section~\ref{sec:registration}) and the point cloud generation task (Section~\ref{sec:generation}). 
Additionally, we also provide an evaluation of the number of slices used in the sliced Wasserstein (SSW) on the reconstruction, classification, and registration task.

\section{Robustness}
\label{sec:robustness}
We conduct an experiment to compare robustness between Chamfer, EMD, and SWD. Particularly, we train autoencoder using Chamfer, EMD, and SWD respectively on ShapeNet, with point coordinates in $[-1, 1]$. At test time, we use ModelNet40, and the point clouds are perturbed by Gaussian noise $N(0,\sigma^2)$ with $\sigma \in[0.01, 0.05]$. 
We use the autoencoder to extract features from noisy point clouds and then input to learn a classifier. 
Figure~\ref{fig:robustness} shows the performance of the classifier with increasing standard deviation values, where the experiments are carried out three times and then taken average. The solid lines demonstrate the case where we train the autoencoder with clean point clouds, while the dashed lines demonstrate the case where we train with noisy point clouds, i.e., we perturb ShapeNet in the same way as we do with ModelNet40. In both cases, Figure~\ref{fig:robustness} shows that features learned with SWD are the most robust to noise, outperforming both CD and EMD by about 3\% of accuracy. 
We also found that CD is less robust than EMD.
\begin{figure}[h]
    \small
  \centering
    \includegraphics[width=\linewidth]{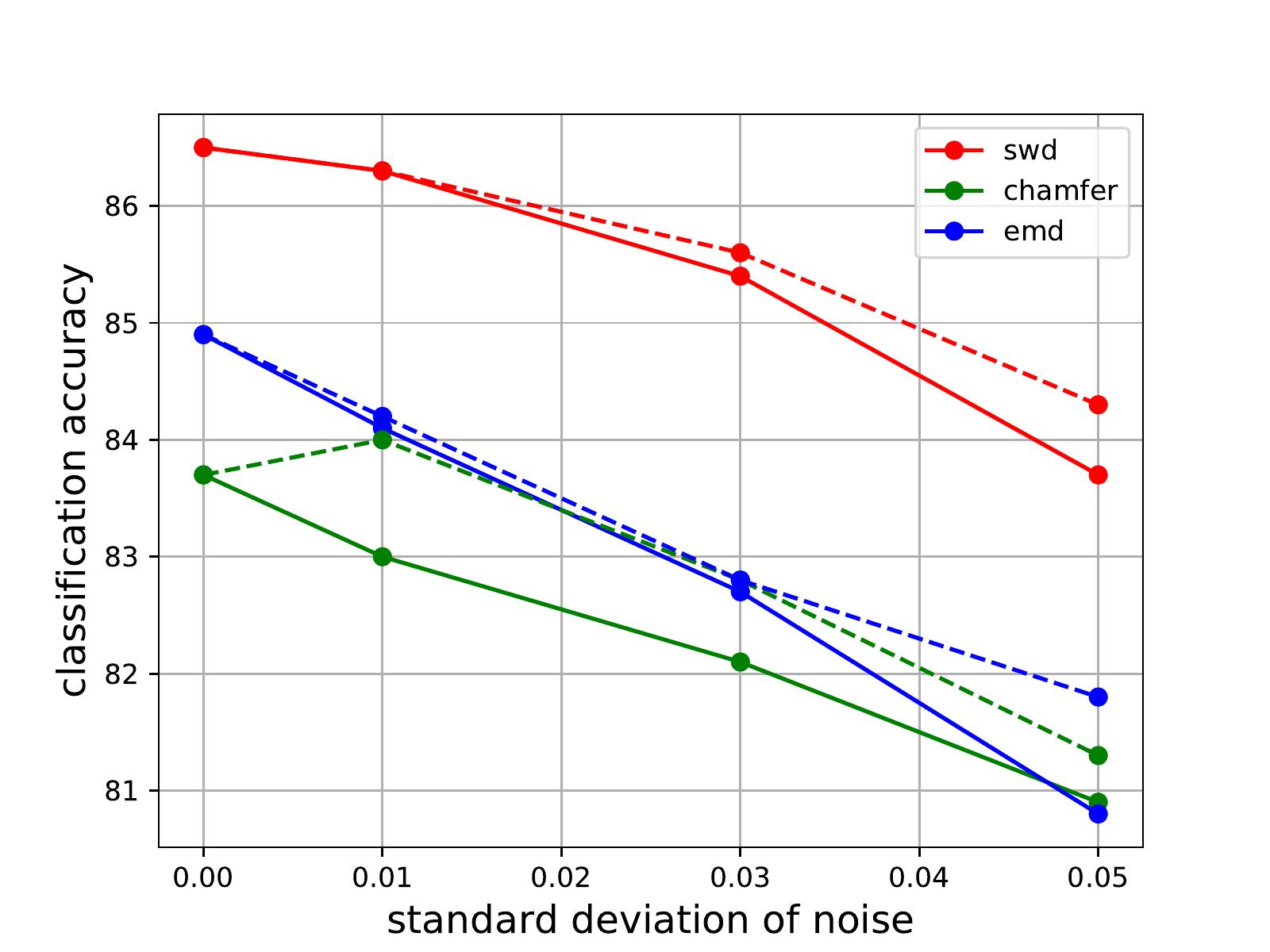}
  \caption{Classification accuracy on ModelNet40 with noisy data.}
  \label{fig:robustness}
  \vspace{-0.1in}
\end{figure}

\subsection{Performance with respect to batch size}
We provide an experiment with batch size in Table~\ref{tab:batchsize}, which shows negligible change in the Chamfer discrepancy between the input and reconstructed point clouds in ModelNet40 across batch sizes. 
\begin{table}[ht]
\centering
\begin{tabular}{cc|c|c|c|c|l}
\cline{3-5}
& & \multicolumn{3}{ c| }{Batch size} \\ \cline{3-5}
& & 32 & 128 & 256 \\ \cline{1-5}
\multicolumn{1}{ |c  }{\multirow{3}{*}{Model} } &

\multicolumn{1}{ |c| }{\textsc{SWD-AE}} & \textbf{0.006} & \textbf{0.007} & \textbf{0.008} \\ \cline{2-5}

\multicolumn{1}{ |c  }{}                         &

\multicolumn{1}{ |c| }{\textsc{CD-AE}} & 0.012 & 0.014 & 0.014 \\ \cline{2-5}

\multicolumn{1}{ |c  }{}                         &

\multicolumn{1}{ |c| }{\textsc{EMD-AE}} & 0.012 & 0.014 & 0.013 \\ \cline{1-5}
\end{tabular}
\caption{
    Average of the discrepancy between the input point clouds and their reconstructed versions on ModelNet40 with different batch sizes.
}
\label{tab:batchsize}
\end{table}

\section{Generalized sliced Wasserstein distance}
\label{sec:gsw}
First, we recall briefly the definition of generalized sliced Wasserstein distance~\cite{gsw_nips19}. Generalized sliced-Wasserstein distance (GSW) extends the sliced-Wasserstein distance by replacing the inner product $\theta^{\top}x$ with a defining function $g(\theta, x)$ (cf. Assumptions H1-H4 in~\cite{gsw_nips19} for the definition of defining function). Denote $\pi_{g_{\theta}}\sharp\mu$ the pushforward measure of $\mu$ through the mapping $g_{\theta}$ where $g_\theta(x):=g(\theta, x)$ for all $x$. Then, for $p\geq1$, the \emph{GSW} is given by
\begin{equation}
    GSW_p(\mu, \nu) : = \Big(\int_{\Omega_\theta}W^p_p(\pi_{g_\theta}\sharp\mu, \pi_{g_\theta}\sharp\nu)\Big)^{1/p}
\end{equation}
where $\Omega_\theta$ is the compact set of feasible parameters.\\
In our experiments, $\Omega_\theta:=\mathbb{S}^{2}$ and $g(x, \theta) := ||x-\theta||_2$. To estimate GSW, we use Monte Carlo scheme as follows:
\begin{align}
    GSW_p(\mu,\nu) \approx \Big(\frac{1}{N} \sum_{i=1}^N W_p^p\big(\pi_{g_{\theta_i}} \sharp\mu, \pi_{g_{\theta_i}} \sharp \nu\big) \Big)^{\frac{1}{p}}. \label{eq:approximation_GSW}
\end{align}
where we set $N:=100$.
In Table~\ref{tab:recon-cls}, we provide numerical results for \textsc{GSW} on reconstruction and classification tasks.
\textcolor{black}{As we can see, \textsc{GSW} is slightly better than \textsc{SW} and \textsc{MSW} in reconstruction task, while \textsc{SW} is slightly better than other variants in classification task.}

\begin{table}[ht]
  \small
  \centering
  \begin{tabular}{l|ccc|c}
    \toprule
    Method                 & CD             & SWD            & EMD     & Accuracy (\% ) \\
    \midrule
    \textsc{CD-AE}         & 0.014          & 6.738          & 0.314    &  83.9     \\
    \textsc{EMD-AE}        & 0.014          & 2.295          & 0.114    &  84.4    \\
    \textsc{SSW-AE} (ours) & 0.007 & 0.831 & 0.091 & \textbf{86.8} \\
    \textsc{ASW-AE} (ours) & 0.007 & 0.854 & 0.092 & \textbf{86.8} \\
    \textsc{MSW-AE} (ours) & 0.007 & 0.865 & 0.093 & 86.5 \\
    \textsc{GSW-AE} (ours) & \textbf{0.006} & \textbf{0.816} & \textbf{0.090} & 85.8 \\
    \bottomrule
  \end{tabular}
  \caption{
    Quantitative measurements of the discrepancy between the input point clouds and their reconstructed versions on ModelNet40. The last column is the classification accuracy on ModelNet40. 
  }
  \label{tab:recon-cls}
  \vspace{-0.3cm}
\end{table}

\begin{table}[t]
  \centering
  \begin{tabular}{l|ccc|c}
    \toprule
    Method                 & CD             & SWD            & EMD     & Accuracy(\%)       \\
    \midrule
    \textsc{CD-AE}         & 0.014          & 6.738          & 0.314    & 83.9     \\
    \textsc{EMD-AE}        & 0.014          & 2.295          & 0.114    & 84.4     \\
    \midrule
    \textsc{SSW1-AE}       & \textbf{0.007} & 0.901 & 0.094 & 86.5\\
    \textsc{SSW2-AE}       & \textbf{0.007} & 0.865 & 0.093 & 86.5\\
    \textsc{SSW5-AE}       & \textbf{0.007} & 0.829 & \textbf{0.091} & 86.7\\
    \textsc{SSW10-AE}      & \textbf{0.007} & \textbf{0.812} & \textbf{0.091} & \textbf{86.8}\\
    \textsc{SSW50-AE}      & \textbf{0.007} & 0.849 & 0.092 & \textbf{86.8}\\
    \textsc{SSW100-AE}     & \textbf{0.007} & 0.831 & \textbf{0.091} & \textbf{86.8}\\

    \bottomrule
  \end{tabular}
  \caption{
    Quantitative measurements of the discrepancy between the input point clouds and their reconstructed versions on ModelNet40. The last column is the classification accuracy on ModelNet40. 
  }
  \label{tab:recon-full}
\end{table}

\subsection{Effect of the number of slices}
We measure the effect of varying the number of slices when computing sliced Wasserstein distance using Monte Carlo scheme. 
We denote $\text{SSW}n-\text{AE}$ the auto-encoders trained using the sliced Wasserstein distance estimated by Monte Carlo estimation with $n$ projections.
We provide quantitative results for reconstruction and classification tasks in Table~\ref{tab:recon-full}. 
Table~\ref{tab:recon-full} shows that increasing the number of slices in Monte Carlo estimation does not affect performance much in reconstruction and classification tasks.

\section{Point cloud registration}
\label{sec:registration}
In Table~\ref{tab:registration_supp}, we provide quantitative results as discussed in section Point cloud registration on page 7 of the main paper. \textcolor{black}{Table~\ref{tab:registration_supp} shows that \textsc{SSW} archives the best recall on average.}
\begin{table}[ht]
  \centering
  \begin{tabular}{lccccc}
    \toprule
            & \textsc{ASW-AE} & \textsc{MSW-AE} & \textsc{GSW-AE} & \textsc{SSW-AE}\\
    \midrule
    home1   & \textbf{63.2} & \textbf{63.2} & 62.3 & 60.4
        \\
    home2   & 49.1 & 49.1   & \textbf{52.8}&47.8
        \\
    hotel1  & 67.6 & 66.5 & 68.1 & \textbf{69.8}
        \\
    hotel2  & 46.2 & \textbf{48.7}  &46.2&\textbf{48.7}
        \\
    hotel3  & 57.7 & 53.8 & 57.7 & \textbf{65.4}
        \\
    kitchen & \textbf{64.1} &63.3  & 63.3 & 62.6
        \\
    lab     & 44.4 &44.4  & 46.7 & \textbf{48.9}
        \\
    study   & 57.7           &55.6   & \textbf{58.5} & 55.6
        \\
    \midrule
    Average & 56.3           &55.6    & 57.0 & \textbf{57.4}
        \\
    \bottomrule
  \end{tabular}
  \caption{
    3D registration results (recall) on the 3DMatch benchmark.
  }
  \label{tab:registration_supp}
\end{table}

\begin{table*}[ht]
	\centering
	\begin{tabular}{lccccccc}
		\toprule
		& \textsc{SSW1-AE} & \textsc{SSW5-AE} & \textsc{SSW10-AE} & \textsc{SSW50-AE} &\textsc{SSW100-AE} & \textsc{EMD-AE} & \textsc{CD-AE} \\
		\midrule
		home1   &62.3 & 63.2 & 61.3 & 63.2 &60.4 & 60.4 & 59.4\\
		home2   &49.7 & 48.4 & 49.1 & 50.9 &47.8 & 46.5 & 47.2\\
		hotel1  &65.9 & 68.7 & 65.9 & 68.1 &69.8 & 62.1 & 62.6\\
		hotel2  &50.0 & 43.6 & 43.6 & 47.4 &48.7 & 44.9 & 43.6\\
		hotel3  &50.0 & 57.7 & 53.8 & 65.4 &65.4 & 34.6 & 46.2\\
		kitchen &64.4 & 62.6 & 63.7 & 62.1 &62.6 & 57.0 & 58.4\\
		lab     &42.2 & 40.0 & 46.7 & 48.9 &48.9 & 46.7 & 42.2\\
		study   &56.4 & 56.0 & 56.0 & 55.6 &55.6 & 50.0 & 50.4\\
		\midrule
		Average &55.1 & 55.0 & 55.0 & \textbf{57.7} & \underline{57.4} & 50.3 & 51.3\\
		\bottomrule
	\end{tabular}
	\caption{
		Varying number of slices for the 3D registration task. The best scores are highlighted in bold. The second best scores are underlined.
	}
	\label{tab:registration_supp-full}
\end{table*}

\begin{table*}[ht]
  \centering
  \begin{tabularx}{\textwidth}{l*{7}{Y}}
    \toprule
    &  \multirow{2}{*}{JSD ($\downarrow$)} & \multicolumn{2}{c}{MMD ($\downarrow$)} & \multicolumn{2}{c}{COV (\%, $\uparrow$)} & \multicolumn{2}{c}{1-NNA (\%, $\downarrow$)} \\
    \cmidrule(lr){3-4} \cmidrule(lr){5-6} \cmidrule(lr){7-8}
    Method                 &               & CD            & EMD            & CD             & EMD            & CD             & EMD            \\
    \midrule
    \textsc{SSW-AE} & 3.24 & 0.79 & 11.22 & 28.51 & 37.96 & 91.43 & 91.80 \\
    \textsc{ASW-AE} & 3.58 & 0.73 & 10.65 & 31.76 & 36.48 & 93.57 & 93.94 \\
    \textsc{MSW-AE} & 3.96 & \textbf{0.59} & \textbf{9.64} & \textbf{35.89} & \textbf{40.47} & \textbf{89.73} & \textbf{89.29} \\
    \textsc{GSW-AE} & \textbf{3.06} & 0.76 & 10.98 & 30.13 & 37.52 & 91.21 & 91.65 \\
    \bottomrule
  \end{tabularx}
  \caption{
    Quantitative results of point cloud generation task on the chair category of ShapeNet.
    $\uparrow$: the higher the better, $\downarrow$: the lower the better.
    JSD, MMD-CD, and MMD-EMD scores are all multiplied by $10^2$.
  }
  \label{tab:pointgen_supp}
\end{table*}

\begin{figure*}[t]
  \centering
  \newcommand{\sizea}{0.15\linewidth}
  \includegraphics[width=\sizea]{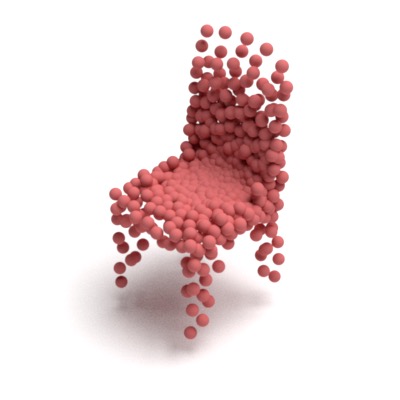}%
  \includegraphics[width=\sizea]{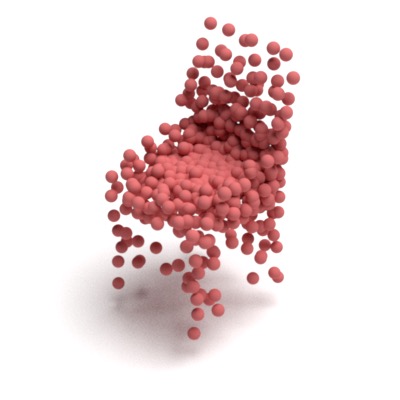}%
  \includegraphics[width=\sizea]{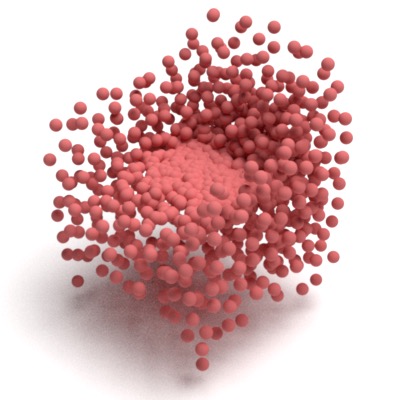}%
  \includegraphics[width=\sizea]{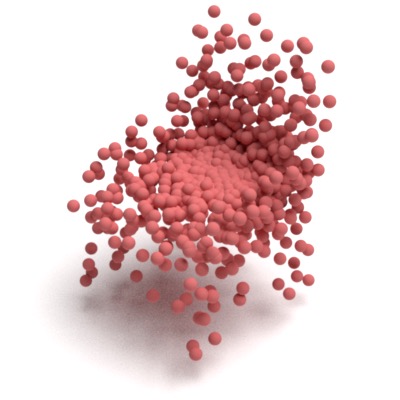}%
  \includegraphics[width=\sizea]{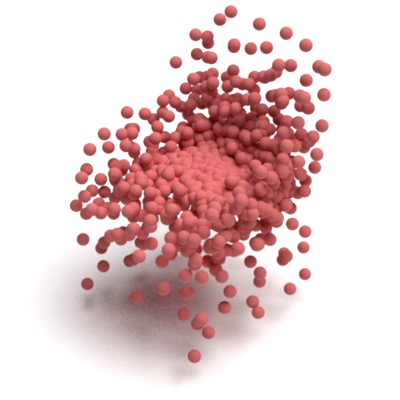}%
  \includegraphics[width=\sizea]{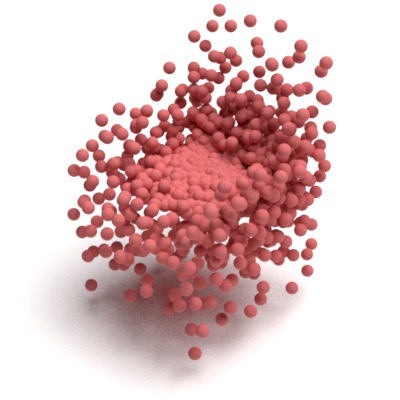}\\
  \includegraphics[width=\sizea]{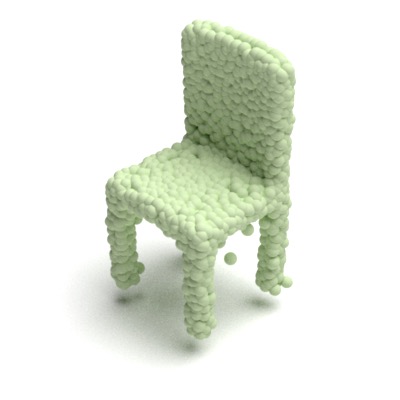}%
  \includegraphics[width=\sizea]{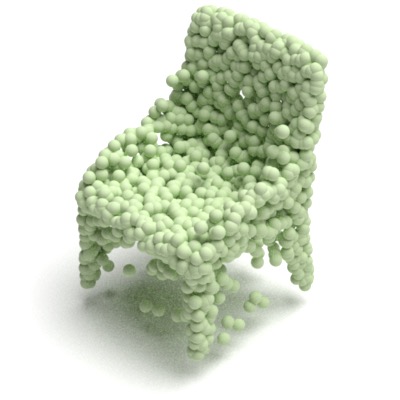}%
  \includegraphics[width=\sizea]{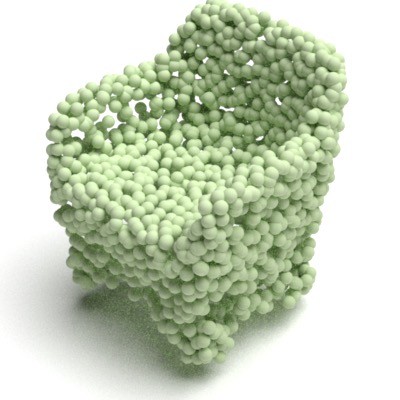}%
  \includegraphics[width=\sizea]{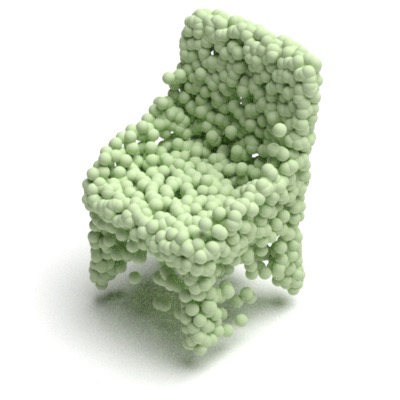}%
  \includegraphics[width=\sizea]{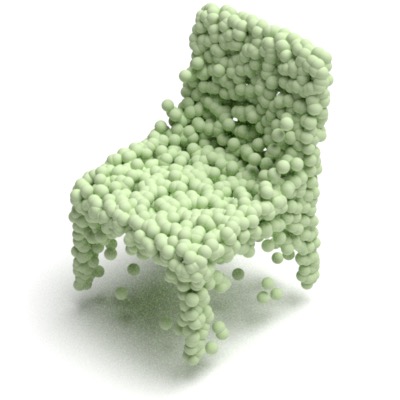}%
  \includegraphics[width=\sizea]{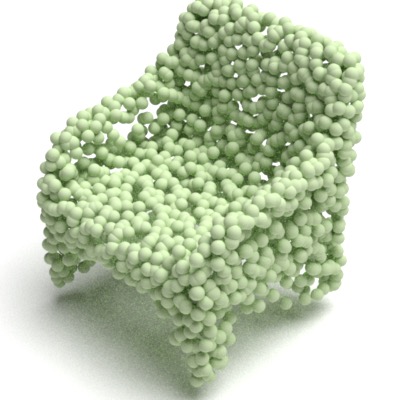}\\

  \includegraphics[width=\sizea]{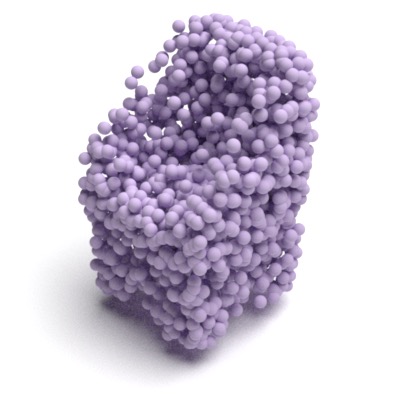}%
  \includegraphics[width=\sizea]{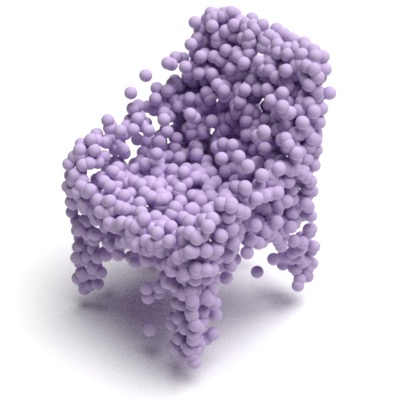}%
  \includegraphics[width=\sizea]{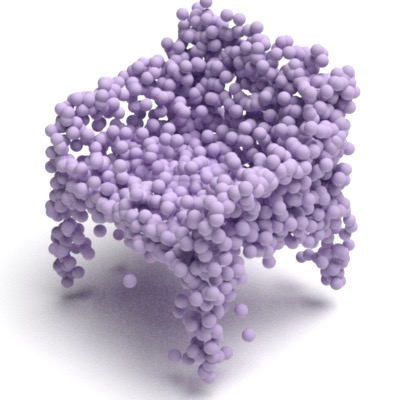}%
  \includegraphics[width=\sizea]{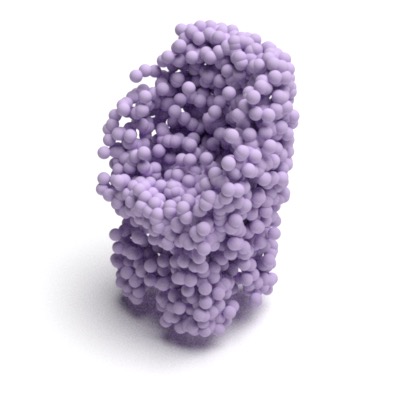}%
  \includegraphics[width=\sizea]{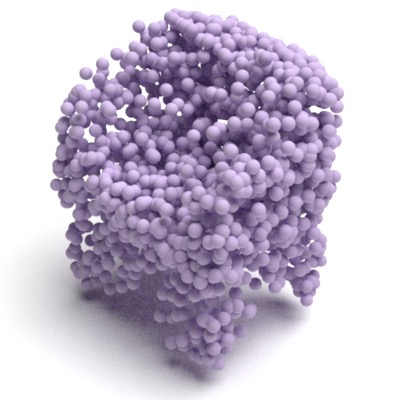}%
  \includegraphics[width=\sizea]{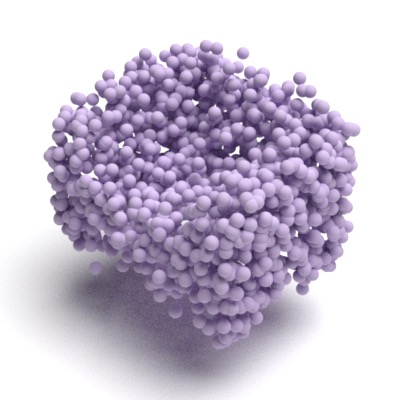}\\
    \includegraphics[width=\sizea]{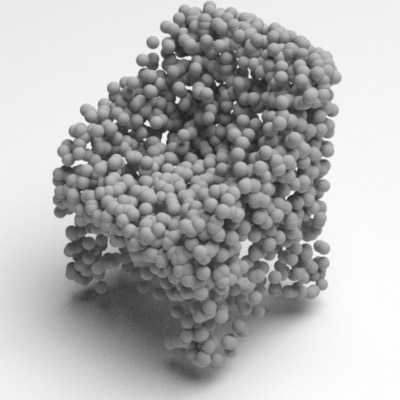}%
  \includegraphics[width=\sizea]{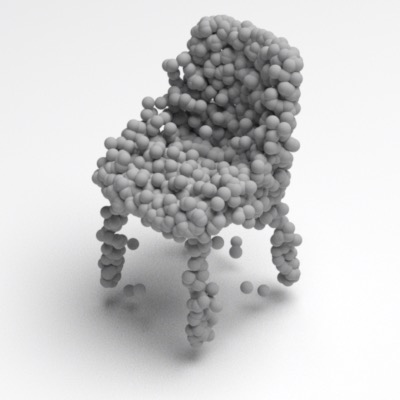}%
  \includegraphics[width=\sizea]{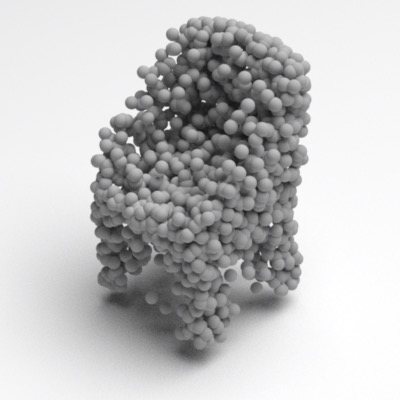}%
  \includegraphics[width=\sizea]{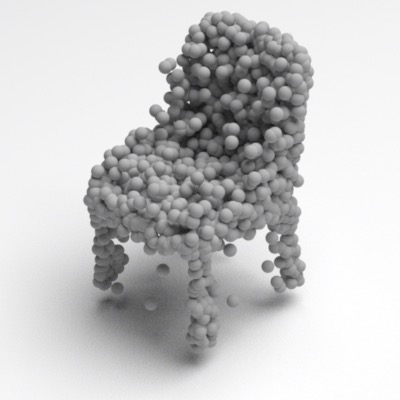}%
  \includegraphics[width=\sizea]{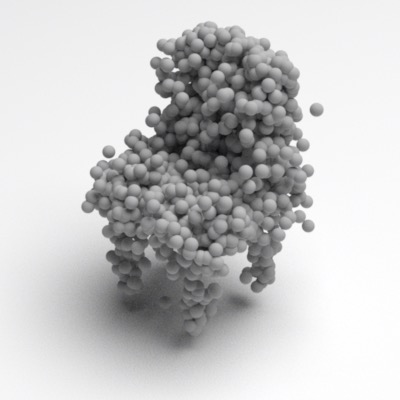}%
  \includegraphics[width=\sizea]{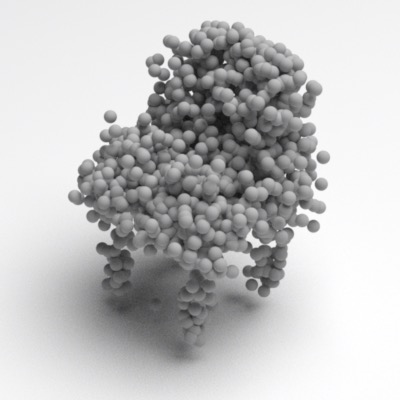}\\
  \includegraphics[width=\sizea]{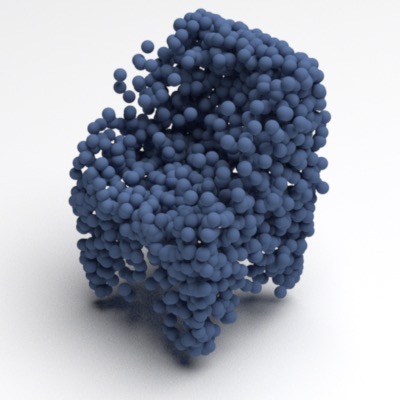}%
  \includegraphics[width=\sizea]{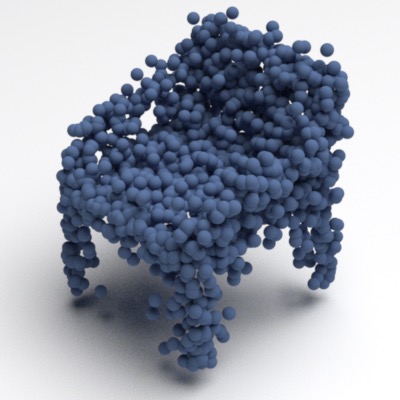}%
  \includegraphics[width=\sizea]{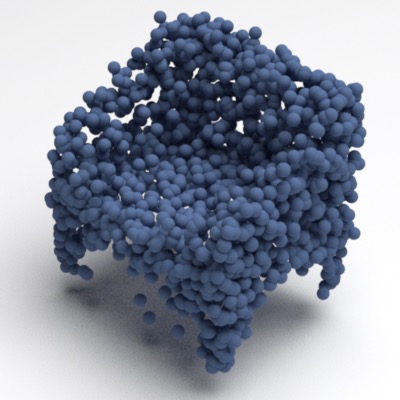}%
  \includegraphics[width=\sizea]{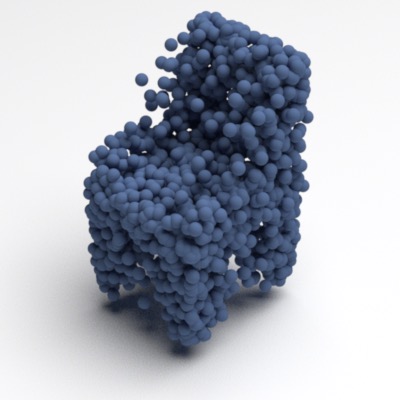}%
  \includegraphics[width=\sizea]{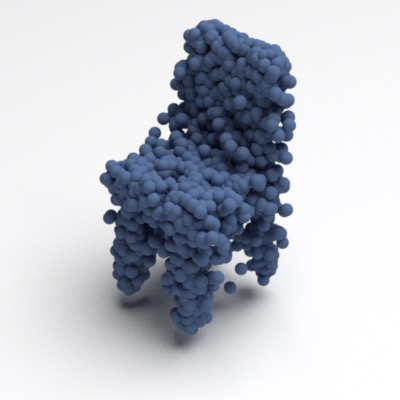}%
  \includegraphics[width=\sizea]{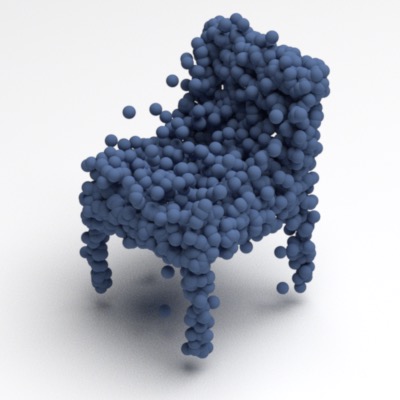}\\
     \includegraphics[width=\sizea]{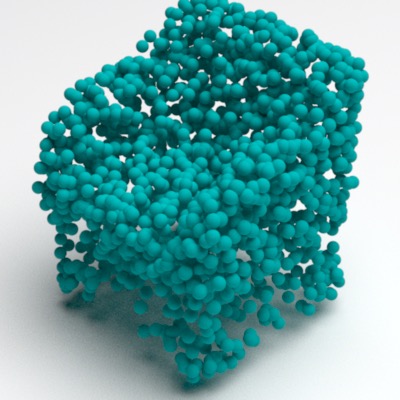}%
  \includegraphics[width=\sizea]{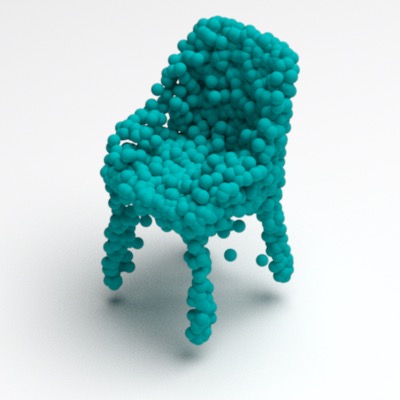}%
  \includegraphics[width=\sizea]{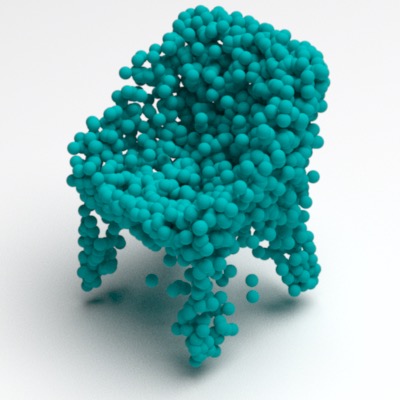}%
  \includegraphics[width=\sizea]{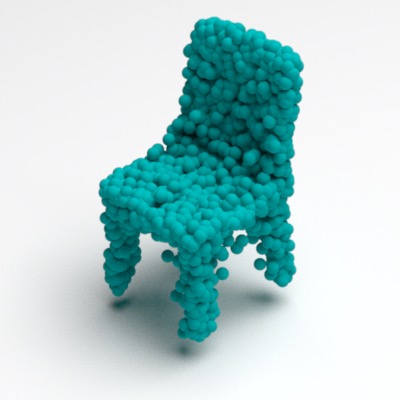}%
  \includegraphics[width=\sizea]{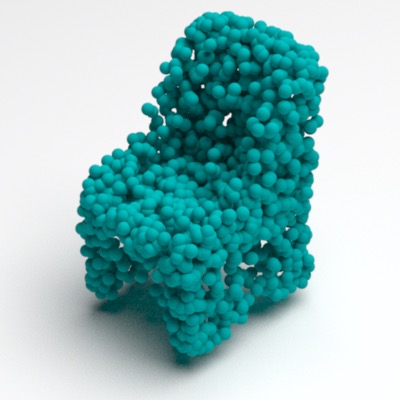}%
  \includegraphics[width=\sizea]{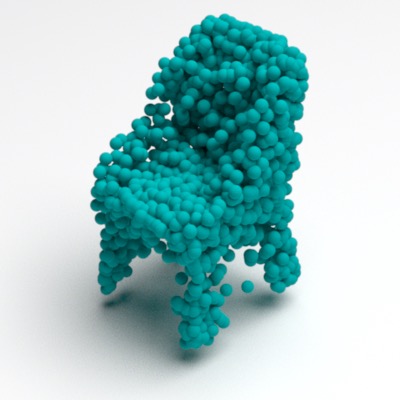}
  \caption{
    Point cloud generation results of the trained autoencoders on the chair category of ShapeNet.
    From top to bottom: \textsc{Chamfer-AE} (red), \textsc{EMD-AE} (green) , \textsc{SSW-AE} (magenta), \textsc{ASW-AE} (gray) and \textsc{MSW-AE} (navy) and \textsc{GSW-AE} (aqua).
  }
  \label{fig:pointgen}
\end{figure*}

As in reconstruction, we measure the effect of varying the number of slices when computing sliced Wasserstein distance for the registration task. 
The result is shown in Table~\ref{tab:registration_supp-full}.
In the registration task, increasing the number of slices helps improve the performance by more than 2\% on average (Table~\ref{tab:registration_supp-full}).

\section{Point cloud generation}
\label{sec:generation}
In Figure~\ref{fig:pointgen} and Table~\ref{tab:pointgen_supp}, we provide qualitative and quantitative results as mentioned in section Point cloud generation on page 6 of the main paper. \textcolor{black}{As we can see, \textsc{MSW} archives best performance among \textsc{SW} variants in generation tasks.} 

\end{document}